\algnewcommand{\LineComment}[1]{\Statex \hskip\ALG@thistlm \(\triangleright\) #1}
\theoremstyle{plain}
\newtheorem{theorem}{Theorem}[section]
\newtheorem{lemma}[theorem]{Lemma}
\theoremstyle{definition}
\newtheorem{assumption}[theorem]{Assumption}
\theoremstyle{remark}
\title{FedEL: Federated Elastic Learning for Heterogeneous Devices}
\author{%
  Letian Zhang \\
  Middle Tennessee State University \\
  Murfreesboro, TN 37132 \\
  \texttt{letian.zhang@mtsu.edu} \\
  \And
  Bo Chen \\
  Middle Tennessee State University \\
  Murfreesboro, TN 37132 \\
  \texttt{bc7b@mtmail.mtsu.edu} \\
  \And
  Jieming Bian \\
  University of Florida \\
  Gainesville, FL 32611 \\
  \texttt{jieming.bian@ufl.edu} \\
  \AND
    Lei Wang \\
  University of Florida \\
  Gainesville, FL 32611 \\
  \texttt{leiwang1@ufl.edu} \\
  \And
  Jie Xu \\
  University of Florida \\
  Gainesville, FL 32611 \\
  \texttt{jie.xu@ufl.edu} \\
}
\begin{document}

\maketitle

\begin{abstract}
Federated learning (FL) enables distributed devices to collaboratively train machine learning models while maintaining data privacy. However, the heterogeneous hardware capabilities of devices often result in significant training delays, as straggler clients with limited resources prolong the aggregation process. Existing solutions such as client selection, asynchronous FL, and partial training partially address these challenges but encounter issues such as reduced accuracy, stale updates, and compromised model performance due to inconsistent training contributions.
To overcome these limitations, we propose FedEL, a federated elastic learning framework that enhances training efficiency while maintaining model accuracy. FedEL introduces a novel window-based training process, sliding the window to locate the training part of the model
and dynamically selecting important tensors for training within a coordinated runtime budget. This approach ensures progressive and balanced training across all clients, including stragglers. Additionally, FedEL employs a tensor importance adjustment module, harmonizing local and global tensor importance to mitigate biases caused by data heterogeneity. The experiment results show that FedEL achieves up to 3.87× improvement in time-to-accuracy compared to baselines while maintaining or exceeding final test accuracy.
\end{abstract}

\section{Introduction}

Federated learning (FL) \cite{mcmahan2017communication, liu2024fedbcgd, 10546478, NEURIPS2024_a11e42a3, zhao2018federated, karimireddy2020scaffold, li2020federated, wang2024federated} is a privacy-preserving machine learning paradigm where distributed clients, such as mobile devices and IoT systems, collaboratively train a global model while keeping their data local. Typically, FL involves devices performing local model training and sharing parameters with a central server for global model updates. However, heterogeneous hardware capabilities among devices lead to \say{straggler}, or slower clients, causing significant training delays as the server must wait for their updates. This challenge hinders the scalability of FL, particularly in large-scale cross-device scenarios.

\textbf{Status Quo and Their Limitations.} 
To address computational constraints, existing solutions fall into three main categories: client selection, asynchronous FL, and partial training. \textit{Client Selection} (Figure \ref{fig:existing_vs_fedEL}, top-left). Selecting a subset of devices for training based on specific criteria can mitigate delays. However, significant differences in clients' data distributions often leave stragglers underrepresented, reducing the global model's accuracy. \textit{Asynchronous FL} (Figure \ref{fig:existing_vs_fedEL}, top-right). This approach decouples local training from global aggregation, allowing stragglers to train independently. While this reduces delays, the global model often relies on faster clients, leaving stragglers' contributions infrequent and potentially outdated, which may harm convergence \cite{xu2023asynchronous}. \textit{Partial Training} (Figure \ref{fig:existing_vs_fedEL} bottom left).  Techniques like width and depth scaling adjust the model architecture to accommodate varying resources. Width scaling resizes convolutional layers, risking channel mismatches during aggregation \cite{kim2023depthfl}, while depth scaling limits training to early layers, leading to suboptimal task-specific features and reduced performance \cite{wu2024heterogeneity}. These limitations highlight the need for a novel training paradigm to overcome resource heterogeneity and enable high-performance FL in real-world deployments.
\begin{wrapfigure}{r}{0.55\textwidth}
  \begin{center}
    \includegraphics[width=0.55\textwidth]{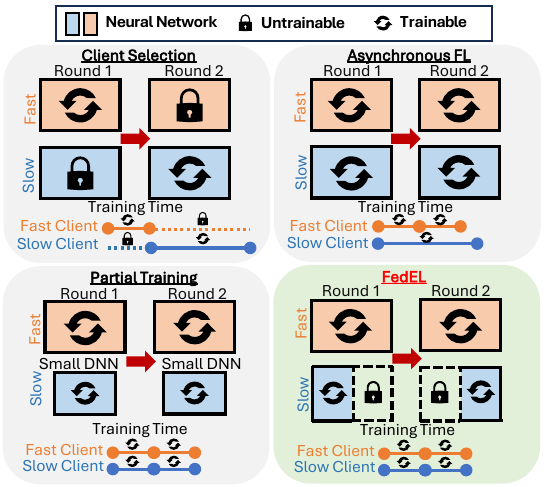}
  \end{center}
  \caption{Existing works vs. FedEL.}
  \label{fig:existing_vs_fedEL}
  \vspace{-10pt}
\end{wrapfigure}
ElasticTrainer \cite{huang2023elastictrainer} introduces a method for selecting important deep neural network (DNN) tensors to meet runtime training requirements on \textit{a single device}. By focusing on these key tensors, ElasticTrainer accelerates training. When applied to FL, it offers a potential solution for addressing stragglers by allowing each client to select important tensors based on its hardware capabilities under a unified runtime constraint. This ensures that all clients complete local training within a similar timeframe, enabling synchronized global model aggregation.
However, directly deploying ElasticTrainer in FL scenarios has two limitations: \textit{Limitation$\#$1 Limited Training Scope on Slower Clients:} Due to the chained rule of DNN backward propagation, unselected tensors still compute gradients to propagate updates to the selected tensors. This constrains the selected tensors on slower devices to the back-end of the DNN, reducing training on the front-end feature extraction layers and degrading FL accuracy, especially with heterogeneous data distributions. \textit{Limitation$\#$2 Exacerbated Local Model Drift:} Variations in data distribution cause significant differences in tensor importance across clients. Training only the important tensors amplifies local model drift, where client models diverge from the global model, further reducing accuracy.

\textbf{Overview of the Proposed Approach.} Motivated by the above limitations, we propose FedEL, a federated elastic learning framework that enhances federated training efficiency.
\underline{To address the first limitation}, we propose a window-based training approach that divides the DNN model into multiple blocks, ensuring that each part of the model is trained during FL rounds. Before training, we use a tensor timing profiler to measure the training time for each tensor, which is then aggregated into block-level training times. In each FL round, the window slides to include a set of blocks based on the runtime budget and current training status. The sliding window process involves moving the front edge to include deeper blocks and shrinking the end edge to exclude blocks that no longer require training. ElasticTrainer is then modified to select important training tensors within the selected window, allowing straggler clients to progressively train the crucial tensors of the entire DNN model. \underline{To address the second limitation}, we design a tensor importance adjustment module.
At the start of each FL round, the client estimates the global model's tensor importance using the global models from the current and previous rounds, along with the learning rate. This global tensor importance is used to adjust the local tensor importance computed by ElasticTrainer, ensuring tensor selection considers both local and global data distributions.

\textbf{Evaluation.} We implement FedEL on both a hardware testbed and software simulations. The hardware testbed consists of ten NVIDIA Jetson devices connected wirelessly to a server. To simulate large-scale scenarios, we extend the setup with a diverse client simulation. We evaluate FedEL using various DNN models and four real-world FL datasets across three key tasks: image classification, voice command recognition, and next-word prediction. Our results show that: (1) FedEL outperforms baselines on the time-to-accuracy. Specifically, FedEL outperforms FedAvg by $3.87 \times$ in time-to-accuracy while final test accuracy is on par with or even higher than FedAvg. (2)  FedEL reduces memory overhead and energy consumption during training compared to existing methods. (3) Ablation studies confirm the necessity and importance of each key component in FedEL’s design.

\vspace{-5pt}
\section{Related Work}
\textbf{On-single-device training.}
Leveraging mobile and embedded computing for DNN model training has gained attention \cite{zhu2024device}. Some studies reduce computation by quantizing or pruning gradient propagation for certain neurons \cite{alistarh2017qsgd, goli2020resprop, sun2017meprop, jacob2018quantization}. Others use a two-stage paradigm, where the system prepares the computing graph and generates a training plan before model training \cite{patil2022poet, xu2022mandheling, wang2022melon, lin2022device, gim2022memory, huang2023elastictrainer}. Our work builds on ElasticTrainer \cite{huang2023elastictrainer}, which dynamically selects important tensors during training. However, applying single-device methods directly to FL scenarios can be challenging due to heterogeneous systems and data.
\\
\textbf{Heterogeneous federated learning.} 
To address the challenges posed by low-end devices in FL, three main training methodologies have been proposed: (1) client selection, (2) asynchronous FL, and (3) partial training. \textit{Client selection} methods \cite{lai2021oort, li2022pyramidfl, cho2022towards, wu2024heterogeneity} evaluate the utility of each client and select a subset to participate in FL rounds. For example, PyramidFL \cite{li2022pyramidfl} ranks clients based on utility. However, when slower clients have unique data, they may be infrequently selected, leading to accuracy loss \cite{li2024comprehensive}.
\textit{Asynchronous
FL} methods \cite{zhang2023timelyfl, zhou2024towards, liu2024aedfl, liao2024asynchronous} allow the global model to be updated as soon as local models are received, bypassing slower clients. TimelyFL \cite{zhang2023timelyfl} adjusts workloads based on client resources, increasing participation. However, this may lead to slower convergence and accuracy issues, as model updates may arrive at different times, causing inconsistencies, particularly with heterogeneous devices and data \cite{xu2023asynchronous}.
\textit{Partial training} involves training part of the model by scaling its width or depth \cite{diao2020heterofl,caldas2018expanding,horvath2021fjord,alam2022fedrolex,kim2023depthfl,setayesh2022perfedmask,wu2024fiarse}. HeteroFL \cite{diao2020heterofl} scales convolutional layers to match devices’ available training time. Similar methods include Federated Dropout \cite{caldas2018expanding} and FjORD \cite{horvath2021fjord}, but these can disrupt model architecture and degrade performance. DepthFL \cite{kim2023depthfl} customizes models based on client resource constraints, but the global model size is limited by the device with the largest memory. Unlike existing methods, FedEL ensures all clients participate in FL rounds, allowing clients with varying speeds to complete training of the full DNN model by sliding their windows.

\vspace{-5pt}
\section{Background and Motivation}
To help better understand our design of FedEL, we first introduce how the ElasticTrainer can speed up on-device DNN training with a small accuracy loss. Afterwards, we show the issues of using ElasticTrainer directly in the heterogeneous federated learning framework, hence motivating our federated elastic selection of the trainable DNN portion at runtime. 

\textbf{ElasticTrainer.} The tensor selection problem in ElasticTrainer \cite{huang2023elastictrainer} is formulated as a constrained optimization problem:
\begin{align}
    &\max_{\boldsymbol{A}} \boldsymbol{A} \cdot \boldsymbol{I},~~s.t.~~ T_{fw} + T_{bw}(\boldsymbol{A}) \le T_{th}.
    \label{eq:obj_ElasticTrainer}
    \vspace{-10pt}
\end{align}
Here, $\boldsymbol{A}$ is a binary mask representing the selected tensors. $\boldsymbol{I}$ is the importance of tensors. $T_{fw}$ is the fixed forward propagation time, independent of tensor selection. $T_{bw}(\boldsymbol{A})$ represents the backward propagation time, which depends on the selected tensors involved in gradient computation. The sum $T_{fw} + T_{bw}(\boldsymbol{A})$ is the estimated training time constrained to a user-defined runtime threshold $T_{th}$, aimed at accelerating training. 
For example, setting $T_{th}$ to $50\%$ of the full model training time implies reducing the training time to half that of full model training.
ElasticTrainer consists of two modules: the tensor timing profiler and the tensor importance evaluator. The tensor timing profiler creates an \textit{offline} tensor-level backward computation time graph, preserving the execution order of all tensors during backward propagation, from the output to the input layer. In the online training phase, at the start of each fixed interval, the tensor importance evaluator evaluates the importance of all tensors $\boldsymbol{I}$. ElasticTrainer then uses dynamic programming to solve the optimization problem \eqref{eq:obj_ElasticTrainer}, freezes unselected tensors, and trains only the selected tensors during each interval.

\textbf{Federated Learning with ElasticTrainer.}
In FL, diverse hardware capabilities lead to significant variations in local training times across clients. By employing ElasticTrainer with a uniform runtime threshold $T_{th}$ across all clients, it becomes theoretically feasible for all clients to participate in each FL round, ensuring consistent training times regardless of hardware differences. Consider an FL setup with $N$ clients, starting from the same initial model. In each FL round r, the central server distributes its current model to all clients. Each client $n$ trains the model on its local data using ElasticTrainer with the uniform $T_{th}$, then sends its model update $\boldsymbol{w}_{n,r}$ to the server. The server aggregates the updates as $\boldsymbol{w}_{r+1} = \sum_{n=1}^N \boldsymbol{c}_n(t) \odot \boldsymbol{w}_{n,r}$, yielding the global model for next round $r+1$, where $(\boldsymbol{c}_n(t))_k = \frac{(\boldsymbol{A}_n(t))_k}{\sum_{n \in \mathcal{N}}(\boldsymbol{A}_n(t))_k}$ denotes the k-th tensor selection of mask $\boldsymbol{A}_n(t)$ at training round $r$.
The updated global model $\boldsymbol{w}_{r+1}$ is broadcast to all clients for the next round. This process iterates until a predefined maximum number of training rounds is reached. 

\begin{wrapfigure}{r}{0.5\textwidth}
\centering
  \begin{subfigure}{0.47\linewidth}
    \includegraphics[width=\linewidth]{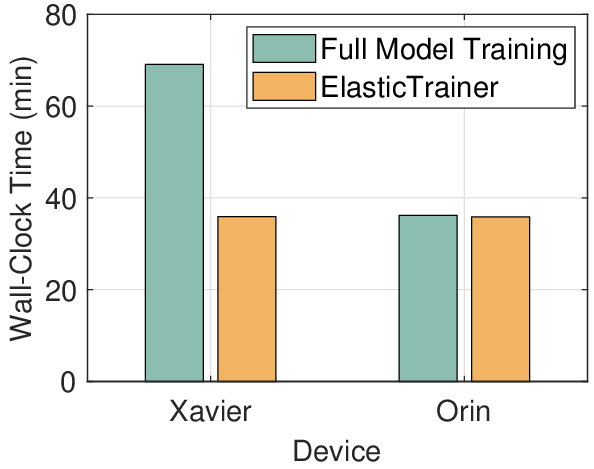}
    \caption{Training time}
    \label{fig:motivation_full_vs_elastic_latency}
  \end{subfigure}
  \begin{subfigure}{0.47\linewidth}
    \includegraphics[width=\linewidth]{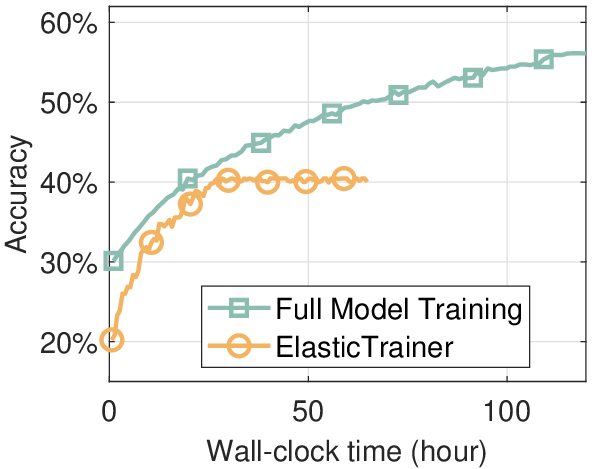}
    \caption{Accuracy}
    \label{fig:motivation_full_vs_elastic_acc}
  \end{subfigure}
  \caption{Average training time per FL round and training accuracy evolution of FedAvg with full model training and FedAvg with ElasticTrainer.}
  \label{fig:motivation_compare}
  \vspace{-10pt}
\end{wrapfigure}
To validate this approach, we design the following experiment. 
\textbf{System Platform.} 
We design a FL system with 10 client devices, consisting of 5 NVIDIA Jetson Xavier NX kits (Xavier) \cite{Xavier} and 5 NVIDIA Jetson Orin kits (Orin) \cite{Orin}. 
All devices connect to a PC via WiFi, with Orin offering superior computational performance compared to Xavier. 
\textbf{Dataset and Model.} 
We focus on an image classification task using the CIFAR10 dataset \cite{krizhevsky2009learning} on VGG16 model \cite{simonyan2014very}, implemented within the FedAvg framework \cite{mcmahan2017communication}. ElasticTrainer \cite{huang2023elastictrainer} is used for local training, and the dataset is partitioned non-iid using a Dirichlet distribution (
$\alpha = 0.1$) \cite{xu2022fedcorr}.
\textbf{Training Setup.} 
The runtime threshold $T_{th}$ is set based on the full model training time of the faster Orin devices, ensuring all clients complete local training within a similar timeframe. 

\textbf{Limitations of FL with ElasticTrainer}
\label{sec:limitation_ElasticTrainer}
Figure \ref{fig:motivation_full_vs_elastic_latency} illustrates the average training time per FL round on Xavier and Orin using FedAvg with full model training and FedAvg with ElasticTrainer. Due to the disparity in computational performance between Xavier and Orin, Xavier’s training time per round is nearly twice as long as Orin's when using FedAvg with full model training. Consequently, Orin clients must wait for Xavier clients to complete their training before responding to the central server, leading to longer idle times for the faster Orin clients.
Figure \ref{fig:motivation_full_vs_elastic_latency} also demonstrates that FedAvg with ElasticTrainer reduces this imbalance, enabling both Xavier and Orin to complete each round of training in roughly the same time. However, as shown in Figure \ref{fig:motivation_full_vs_elastic_acc}, the accuracy of FedAvg with ElasticTrainer is $40.03\%$ lower compared to FL with full model training. In the following sections, we explore in more detail how the direct deployment of ElasticTrainer in FL underutilizes data and training efficiency. These insights are foundational to the design of FedEL.

\begin{wrapfigure}{r}{0.45\textwidth}
  \begin{center}
    \includegraphics[width=0.45\textwidth]{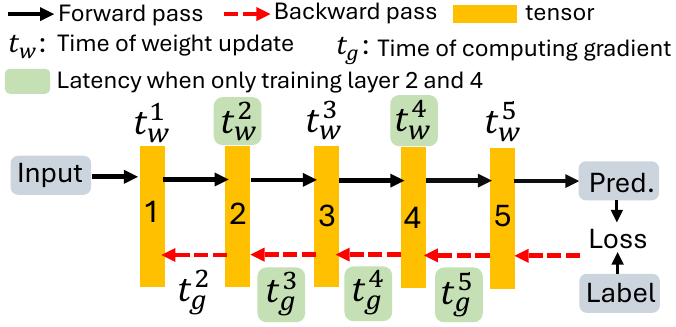}
  \end{center}
  \vspace{-10pt}
  \caption{Tensor selection in ElasticTrainer.}
  \label{fig:motivation_howSelecttensor}
  \vspace{-10pt}
\end{wrapfigure}
\textbf{Limitation$\#1$: Limited Training Scope on Slower Clients.}
ElasticTrainer identifies the most important tensors under a specified training time threshold $T_{th}$. However, the tensor selection process is not straightforward due to the dependencies inherent in backward propagation. Even if a tensor is not selected, it must compute and pass gradients to previous tensors, contributing to the total training time. For example, as illustrated in Figure \ref{fig:motivation_howSelecttensor}, the backward propagation time comprises two components: (1) Gradient Computation Time $t_g$: Time spent calculating the gradient of the current tensor to pass to the previous tensor. (2) Weight Update Time $t_w$: Time spent updating the tensor's weights using gradients from the subsequent tensor. If tensors 2 and 4 are selected, the total training time includes both selected and unselected tensors, calculated as $t^5_g + t^4_w + t^4_g + t^3_g + t^2_w$.
ElasticTrainer employs a dynamic programming approach, starting from the last tensor and selecting important tensors in reverse order until the accumulated weight update and gradient computation time reaches $T_{th}$.

Figure \ref{fig:motivation_tensorSelection} demonstrates tensor selection across Xavier and Orin clients during one FL round. While Orin clients (faster devices) can train nearly all tensors\footnote{Unselected tensors on Orin clients result from ElasticTrainer's computational cost optimization process.}, Xavier clients (slower devices) tend to focus training on the back part of the DNN model. This leaves the front feature extractor layers largely untrained due to the same $T_{th}$ being applied across all devices. In FL settings with non-iid data, this imbalance becomes critical. Xavier clients' untrained feature extractor layers fail to adequately learn essential features, weakening the global model's ability to extract meaningful features. Consequently, the overall accuracy of the FL system degrades.

\begin{wrapfigure}{r}{0.5\textwidth}
	\centering
	\begin{minipage}[t]{0.48\linewidth}
		\includegraphics[width=\textwidth]{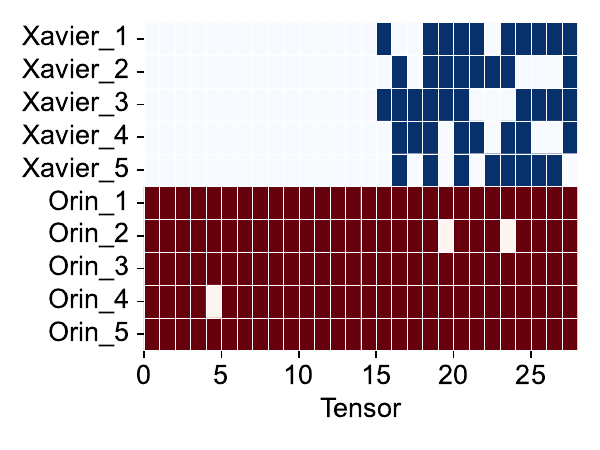}
		\vspace{-15 pt}
		\caption{Tensor selection in Xavier's model and Orin's model.}
		\label{fig:motivation_tensorSelection}
	\end{minipage}
	\hspace{0.01\linewidth}
	\begin{minipage}[t]{0.472\linewidth}
		\includegraphics[width=\textwidth]{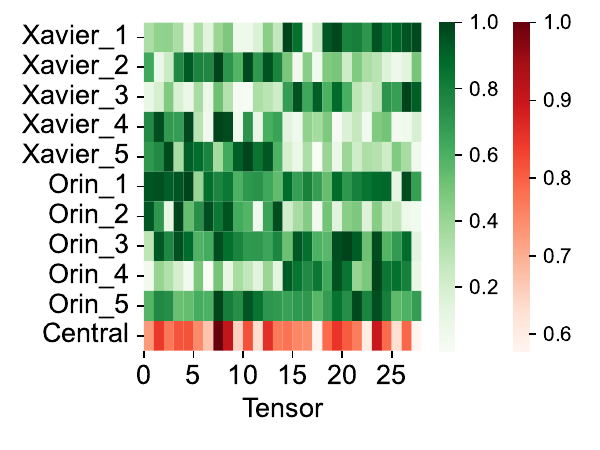}
		\vspace{-15 pt}
		\caption{Tensor importance of ten-device FL and central training.}
		\label{fig:motivation_immportance_matrix}
	\end{minipage}
\end{wrapfigure}

\textbf{Limitation$\#2$: Exacerbated Local Model Drift.}
ElasticTrainer is optimized for centralized training, where all data resides on a single device. However, FL involves distributed training, and recent studies \cite{tan2022fedproto, ye2023fedfm, ye2023feddisco} have highlighted the local model drift challenge arising from non-iid data distribution among clients.
Non-iid data can bias tensor importance evaluation, as local models trained on heterogeneous client datasets reflect varying data distributions. Figure \ref{fig:motivation_immportance_matrix} compares tensor importance across ten FL clients and centralized training. In FL, tensor importance differs significantly between clients and also the centralized training due to non-iid data.
ElasticTrainer's selective training exacerbates this bias by freezing unselected tensors, intensifying local model drift. As a result, when the central server aggregates these biased local models, the global model accuracy suffers compared to full model training in FL.

\section{FedEL Design}
\subsection{Sliding Window Training}
To address Limitation$\#1$, we propose dividing the DNN into multiple blocks and utilizing a window-based scheme that ensures every part of the DNN model has the opportunity to be trained during the FL local training rounds. Specifically, the DNN model is partitioned into $B$
blocks, denoted as 
$[\theta_1, \theta_2, \dots, \theta_B]$, based on its original architecture. Each block may consist of one or more layers, preserving the inherent structural integrity of the model.
For instance, in VGG16, which follows a chain-like architecture, each layer can be treated as a separate block. In contrast, ResNet50 contains residual structures, so each residual structure can be considered a block, while other layers outside these structures can also be treated as individual blocks.

\begin{wrapfigure}{r}{0.45\textwidth}
  \centering
    \includegraphics[width=\linewidth]{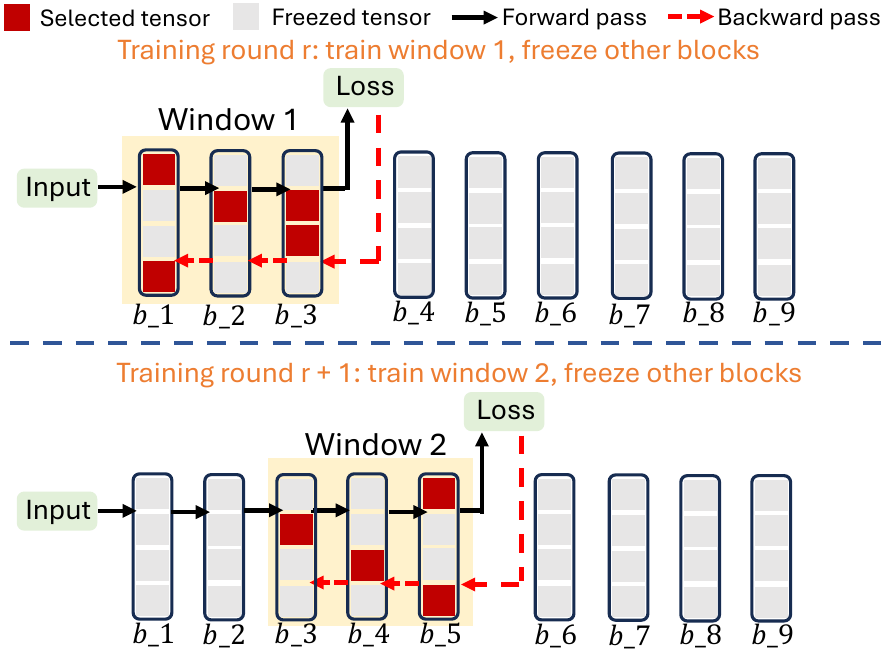}
  \caption{Overview of window-based training in FedEL.}
  \label{fig:overview_block_based_training}
\end{wrapfigure}

\textbf{Offline Tensor Time Profiling.} 
Before initiating the online training process, each client uses the tensor timing profiler in ElasticTrainer to measure the training time for each tensor. This offline tensor-level timing data is then aggregated into block-wise training times by summing the training times of all tensors within each block. Assume block $b$ contains a set $K^b$ of tensors. The training time $T^b$ of block b is computed as: $T^b = \sum_{k \in K^b}(t_g^k + t_w^k)$, where $t_g^k$ is the time of computing the gradient, and $t_w^k$ is the time of updating weights for each tensor $k \in K^b$.

\textbf{Online Window-Based Training}
Using the block-wise training time file, we first initialize the starting window. The initial window begins with the first block, 
$\theta_1$, and progressively includes subsequent blocks until the cumulative training time just exceeds the user-specified runtime threshold $T_{th}$. Specifically, the initial window consists of $\Theta_0 = \{\theta_1, \dots, \theta_m\}$, where $\sum_{b \in \{1, \dots, m-1\}} T^b < T_{th}$ and $\sum_{b \in \{1, \dots, m\}} T^b \ge T_{th}$.
At each FL round, the window slides, and ElasticTrainer is applied to train the corresponding portion of the DNN model. Over time, this approach ensures that the entire model is trained, enabling complete feature extraction from the data. However, as highlighted in Limitation \#1, the blocks outside the current window still require time to compute gradients and pass them to the blocks within the window. This dependency means the original output layer cannot serve as the final output for each window. 

To address this, a lightweight output layer is attached to the last layer of the window, acting as an early exit. 
This ensures independent training for each window and facilitates the completion of the window-based training process.
\textbf{\underline{Example.}}
Figure \ref{fig:overview_block_based_training} illustrates the window-based training process  with early exits in FL. In round $r$, Window 1, comprising blocks 1, 2, and 3, is selected for training, while the remaining blocks are frozen. The early exit of Window 1 serves as the output layer. Inputs are forwarded through Window 1 to generate predictions, which are used to compute the loss gradient. Backward propagation updates only the weights in Window 1, with other blocks entirely excluded from forward and backward propagation. This approach applies ElasticTrainer to Window 1, significantly reducing training time. After round $r$, only Window 1's updated weights are sent to the global server for aggregation, and the updated global model is broadcast to all clients for the next round.
In round $r+1$, Window 1 shifts to Window 2, now consisting of blocks 3, 4, and 5. These blocks are trained while others remain frozen. The early exit of Window 2 acts as the output layer. Inputs are forwarded through blocks 1–5 to produce predictions, but only the weights in blocks 3, 4, and 5 are updated during backward propagation. Blocks after block 5 remain frozen, while blocks 1 and 2 participate in forward propagation to pass intermediate results to Window 2. ElasticTrainer continues to optimize training within Window 2.

This iterative and cyclical training process ensures consistent training time across all clients while allowing all parts of the DNN model to be trained, preserving model accuracy.

\subsubsection{Sliding Window}
We assume that the window has two boundaries: the \textit{Front Edge} and the \textit{End Edge}. All blocks between these edges form the current training window. At the beginning of each FL round, clients slide the window to determine the portion of the DNN model to train based on their training progress.

\begin{figure*}[ht]
  \centering
  \begin{subfigure}{0.32\linewidth}
    \includegraphics[width=\linewidth]{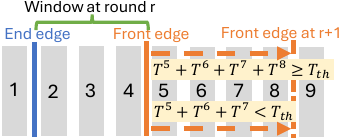}
    \caption{Front edge movement will contain blocks that have accumulated training time just above $T_{th}$.}
    \label{fig:sliding_window_front_edge_1}
  \end{subfigure}
  \hfill
  \begin{subfigure}{0.32\linewidth}
    \includegraphics[width=\linewidth]{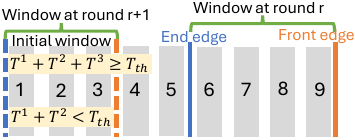}
    \caption{Front edge movement will revert to the initial window when it touches the end of DNN model.}
    \label{fig:sliding_window_front_edge_2}
  \end{subfigure}
  \hfill
  \begin{subfigure}{0.31\linewidth}
    \includegraphics[width=\linewidth]{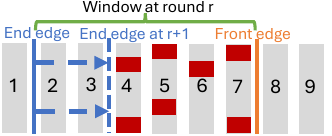}
    \caption{End edge movement will cull out blocks once there are no important tensors selected from them.}
    \label{fig:sliding_window_end_edge_1}
  \end{subfigure}
  \caption{Front edge movement and end edge movement in window sliding.}
  \label{fig:sliding_window}
  \vspace{-10pt}
\end{figure*}

\textbf{Front Edge Movement.} 
As illustrated in Figure \ref{fig:sliding_window_front_edge_1}, the front edge moves forward to include deeper blocks of the DNN model. Each movement adds blocks whose cumulative training time slightly exceeds the user-defined runtime threshold $T_{th}$. For instance, in Figure \ref{fig:sliding_window_front_edge_1}, when training round $r$ begins, the front edge (orange line) shifts to a deeper position (orange dashed line). Here, the cumulative training time of blocks 5, 6, 7, and 8 meets or exceeds $T_{th}$, while the cumulative time for blocks 5, 6, and 7 is below $T_{th}$. 
If the front edge reaches the end of the DNN model and the cumulative training time of newly added blocks is still below $T_{th}$, this is also considered a front edge movement. Once the front edge reaches the model's end, as shown in Figure \ref{fig:sliding_window_front_edge_2}, but FL training is not yet complete, the window resets to the initial window for the next round.
\\
\textbf{End Edge Movement.} 
The end edge moves to shrink the training window and freeze blocks that no longer require training. This movement depends on the current training status. If blocks at the window's end are not selected in the previous FL round, the end edge excludes them from the window.
This adjustment occurs for two reasons: either the window is too large, preventing ElasticTrainer from selecting important tensors within the threshold $T_{th}$, or ElasticTrainer determines no important tensors exist in those blocks. For example, as shown in Figure \ref{fig:sliding_window_end_edge_1}, if blocks 2 and 3 contain no important tensors during training round $r$, the end edge will shift to block 4 in the next round.

\subsubsection{Insert ElasticTrainer into Windows}
To integrate ElasticTrainer into windows, we adapt its tensor selection module. In its original form, the module uses dynamic programming to identify the optimal set of important tensors for local training, starting from the last tensor and proceeding until the accumulated training delay, including weight update time and gradient computation time, reaches the runtime threshold $T_{th}$.
Our modification adjusts the starting point of dynamic programming to begin at the tensor corresponding to the last layer within the current window. Additionally, we introduce a new base case: if a tensor lies outside the window's range, the dynamic programming process halts and returns the selected important tensors.
This adjustment allows ElasticTrainer to be seamlessly applied to window-based training, ensuring efficient and targeted training within each window.

\subsection{Tensor Importance Adjustment}
In Limitation \#2, the tensor importance estimated by ElasticTrainer is biased due to the heterogeneous data distribution across clients. To address this bias, we propose a strategy that leverages the global model after aggregation to compute tensor importance. This global tensor importance is then used in the subsequent local training round to adjust the tensor importance at the client side, thereby improving training efficiency.
After collecting the locally trained models from all connected clients, the server aggregates these models to next round global model $\boldsymbol{w}_{r+1}$. The aggregated global model is then broadcast back to the clients for the next round of training. ElasticTrainer calculates tensor importance as 
$\frac{\partial{L}}{\partial{\boldsymbol{w}}}{\Delta}\boldsymbol{w}$, where the loss gradient is multiplied by the tensor update. Upon receiving the updated global model, clients compute the tensor importance of the global model using the formula: $\boldsymbol{I}^g = \frac{\boldsymbol{w}_{r+1} - \boldsymbol{w}_{r}}{\eta_n} \cdot (\boldsymbol{w}_{r+1} - w_{r}) = \frac{(\boldsymbol{w}_{r+1} - \boldsymbol{w}_{r})^2}{\eta_n}$. Here, 
$\eta_n$ is the learning rate for client $n$, $\frac{\boldsymbol{w}_{r+1} - \boldsymbol{w}_{r}}{\eta_n}$ estimates the global model's loss gradient, and 
$\boldsymbol{w}_{r+1} - \boldsymbol{w}_{r}$ represents the tensor updates in the global model.
The global tensor importance 
$\boldsymbol{I}^g$ is then used to adjust the local tensor importance for each client as follows:
$\boldsymbol{I}_{n,r+1} = \beta \cdot \boldsymbol{I}_{n,r+1} + (1 - \beta) \cdot \boldsymbol{I}^g$, where $\beta \in [0, 1]$ is a balancing parameter that determines the weighting between local and global importance.
This adjustment ensures that local tensor importance aligns better with global priorities, thus improving the overall training accuracy of the model.

Due to page limitations, the complete algorithm and the theoretical convergence analysis of the proposed method are provided in Appendices \ref{appendix:algorithm} and \ref{appendix:convergence}.

\vspace{-10pt}
\section{Evaluation}

\vspace{-10pt}
\subsection{Experiment Setup}

\textbf{Datasets, Models, and Tasks.} To demonstrate FedEL's effectiveness across tasks, datasets, and models, we evaluate FedEL on four real-world datasets designed for FL applications
at different scales. \textbf{Image Classification.} VGG16 \cite{simonyan2014very} model on CIFAR10 dataset \cite{krizhevsky2009learning} and Tiny ImageNet dataset \cite{le2015tiny}. \textbf{Speech Recognition.} ResNet50 \cite{he2016deep} model on Google command speech dataset \cite{warden2018speech}. \textbf{Natural Language Processing.} Lightweight Albert \cite{lan2019albert} model on Reddit dataset \cite{okon2020natural}. To follow the realistic non-iid data in FL scenarios, we partition the datasets into different clusters using a Dirichlet distribution with $\alpha$ equals 0.1. The Reddit datasets inherently exhibits non-iid characteristics.

\textbf{Baselines.} The following baselines are adopted for evaluation purposes: (1) FedAvg \cite{mcmahan2017communication}. (2) ElasticTrainer \cite{huang2023elastictrainer}. (3) HeteroFL \cite{diao2020heterofl}. (4) DepthFL \cite{kim2023depthfl}. (5) PyramidFL \cite{li2022pyramidfl}. (6) TimelyFL \cite{zhang2023timelyfl}. (7) FIARSE \cite{wu2024fiarse}. Detailed descriptions of these baseline methods are provided in the Appendix.

\textbf{FL Setup.} 
To evaluate FedEL's effectiveness, we conduct experiments in two scenarios: a small-scale practical edge device setup and a large-scale simulation. \textit{Small-scale Practical Edge Device Scenario:} FedEL is deployed on ten heterogeneous edge devices, comprising five NVIDIA Jetson Xavier NX kits (Xavier) \cite{Xavier} and five NVIDIA Jetson Orin kits (Orin) \cite{Orin}, connected via WiFi to a central PC. Due to the limited number of devices, we evaluate performance using only the CIFAR10 dataset. \textit{Large-scale Simulation Scenario:} To simulate a larger environment, we use tensor timing profiles generated by ElasticTrainer's offline tensor profiler on Orin as a baseline. From this profiling data, we simulate four types of heterogeneous devices with scaled tensor training times, including devices matching the baseline profiling time, devices with 
$1/2$ of the profiling time, devices with 
$1/3$ of the profiling time, devices with 
$1/4$ of the profiling time. A total of 100 clients are simulated, with each randomly assigned a device type and corresponding processing time. This simulation is conducted on a PC equipped with an NVIDIA 3090 GPU. For fair comparisons with baseline methods, unless stated otherwise, the runtime threshold $T_{th}$ is set to the full model training time of the fastest device, and the balance parameter $\beta$ is fixed at 0.6.

\subsection{End-to-End Performance}

\textbf{FedEL accelerates training while maintaining high accuracy.} 
Table \ref{table:comparison_acc_clock} summarizes the final accuracy and wall-clock training time of FedEL and its baselines. FedEL consistently outperforms baselines under the same training rounds. Below is a detailed analysis of the results: \textbf{FedAvg:} FedEL achieves comparable accuracy to FedAvg, which trains the full model, but reduces wall-clock training time by $1.87\times-3.87\times$. This efficiency arises because FedAvg waits for slower clients to complete training, whereas FedEL dynamically selects portions of the DNN for slower clients, enabling all clients to complete local training in roughly the same time.
\textbf{ElasticTrainer:} While ElasticTrainer speeds up training by up to 3.84× compared to FedAvg, it sacrifices over $28.6\%$ accuracy across four datasets. As noted in Section \ref{sec:limitation_ElasticTrainer}, ElasticTrainer’s focus on selecting important tensors only from the back of the DNN on slower clients limits global model feature extraction. FedEL addresses this limitation, achieving $1\%-2\%$ faster training time than ElasticTrainer by leveraging window sliding to reduce tensor selection overhead, while maintaining high accuracy.
\textbf{HeteroFL:}
FedEL improves accuracy by 5.7\%-14.4\% compared to HeteroFL. The uneven scaling of convolutional layers in HeteroFL compromises parameter training and degrades the model's architecture \cite{kim2023depthfl}. Furthermore, HeteroFL requires complex global aggregation for mismatched parameters, increasing training time. 
\textbf{DepthFL:}
DepthFL partitions models into sub-models for slower clients and uses self-distillation for knowledge transfer. However, its slower training and reliance on training only the front layers of the DNN for slower clients weaken the global model's ability to learn from their data. FedEL outperforms DepthFL with up to 7.1\% higher accuracy.
\textbf{PyramaidFL:} PyramaidFL synchronizes fast and slow clients by allowing fast clients to train for more epochs, accelerating convergence but not reducing total training time. FedEL achieves 1\%-2\% higher accuracy than PyramaidFL by ensuring balanced participation of slower clients.
\textbf{TimelyFL:}
FedEL achieves up to 5\% higher accuracy compared to TimelyFL. The heterogeneity-aware asynchronous approach in TimelyFL reduces participation rates for slower clients, leading to accuracy loss in heterogeneous data environments. FedEL, by contrast, ensures balanced participation across clients, preserving accuracy.
\textbf{FIARSE:} FIARSE does not account for the dependency of backward gradient propagation. Specifically, its output layer is
fixed as the last layer of the network structure. This results in the unselected tensors in FIARSE need to compute and propagate gradients to previously selected tensors.
\begin{table}[tt]
\centering
\caption{Comparison of FedEL with baselines on time-to-accuracy. 
}
\resizebox{1.0\linewidth}{!}{
\begin{tabular}{l|ccc|ccc|ccc|ccc} 
\toprule[1.5pt]
\multirow{3}{*}{Method} &  \multicolumn{6}{c|}{Image Classif.} & \multicolumn{3}{c|}{Speech Recog.} & \multicolumn{3}{c}{NLP} \\
\cline{2-13}
& \multicolumn{3}{c|}{10 Devices} & \multicolumn{3}{c|}{100 Devices} & \multicolumn{3}{c|}{100 Devices} & \multicolumn{3}{c}{100 Devices}\\
\cline{2-13}
& Acc. $\uparrow$ & Time & Speedup & Acc. $\uparrow$ & Time & Speedup & Acc. $\uparrow$ & Time & Speedup & Perp. $\downarrow$ & Time & Speedup\\
\hline
FedAvg \cite{mcmahan2017communication}& \cellcolor{gray!25}\textbf{56.13\%} & 119.8h & N/A & \cellcolor{gray!25}\textbf{33.76\%} & 563.1h & N/A & \cellcolor{gray!25}\textbf{58.04\%} & 709.8h & N/A & \cellcolor{gray!25}\textbf{77.48} & 546.4h & N/A\\
\hline
ElasticTrainer \cite{huang2023elastictrainer}& 40.03\% &64.8h & $1.84 \times$ & 27.65\% & 158.6h & $3.55\times$ & 47.96\% & 184.3h & $3.84\times$ & 81.02 & 176.2h & $3.10\times$\\
\hline
HeteroFL \cite{diao2020heterofl}& 53.44\% & 80.1h & $1.49\times$ & 30.56\% & 248.2h &  $2.26\times$ & 51.47\% & 265.9h & $2.66\times$ & 80.11 & 206.1 & $2.65\times$\\
\hline
DepthFL \cite{kim2023depthfl}& 54.89\% & 77.3h & $1.54\times$ & 34.14\% & 198.3h & $2.83\times$  & 54.23\% & 207.4h & $3.42\times$ & 78.08 & 212.4h & $2.57\times$\\
\hline
PyramidFL  \cite{li2022pyramidfl}& 56.24\% & 115.7h &  $1.03\times$ & 34.70\% & 497.4h & $1.13\times$ & 58.12\% & 587.4h & $1.21\times$ & 77.68 & 418.2h & $1.31\times$\\
\hline
TimelyFL \cite{zhang2023timelyfl} &  53.74\% & 66.3h & $1.81\times$  & 33.53\% & 198.1h & $2.84\times$ & 56.49\% & 193.2h & $3.67\times$ & 80.91 & 177.6h & $3.07\times$\\
\hline
FIARSE \cite{wu2024fiarse} &  56.48\% & 71.9h & $1.66\times$  & 33.98\% & 191.5h & $2.94\times$ & 58.13\% & 198.2h & $3.58\times$ & 77.31 & 191.0h & $2.86\times$\\
\hline
FedEL & \cellcolor{gray!25}\textbf{56.51\%} & \cellcolor{red!20}\textbf{63.8h} & \cellcolor{yellow!25}\textbf{1.87}$\times$ & \cellcolor{gray!25}\textbf{34.96\%} & \cellcolor{red!20}\textbf{156.8h} & \cellcolor{yellow!25}\textbf{3.59}$\times$ & \cellcolor{gray!25}\textbf{58.26\%} & \cellcolor{red!20}\textbf{183.3h} & \cellcolor{yellow!25}\textbf{3.87}$\times$ & \cellcolor{gray!25}\textbf{77.23} & \cellcolor{red!20}\textbf{174.5h} & \cellcolor{yellow!25}\textbf{3.13}$\times$\\
\toprule[1.5pt]
\end{tabular}
}
\label{table:comparison_acc_clock}
\vspace{-20pt}
\end{table}

\begin{wrapfigure}{r}{0.55\textwidth}
	\centering
    \begin{minipage}[t]{0.47\linewidth}
		\includegraphics[width=\textwidth]{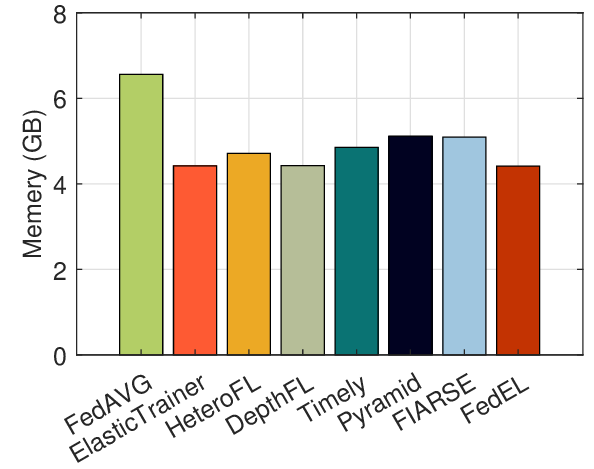}
		\vspace{-10 pt}
		\caption{Memory overhead.}
		\label{fig:memory_cost}
	\end{minipage}
    \hspace{0.01\linewidth}
	\begin{minipage}[t]{0.47\linewidth}
		\includegraphics[width=\textwidth]{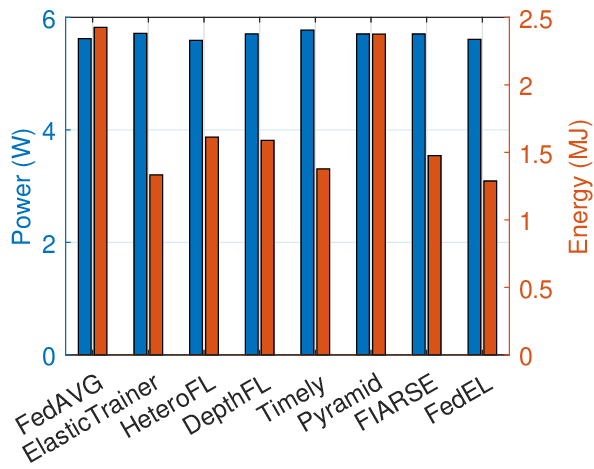}
		\vspace{-10 pt}
		\caption{Power/energy consumption.}
		\label{fig:power_energy}
	\end{minipage}
    \vspace{-10 pt}
\end{wrapfigure}
\textbf{FedEL reduces the memory and energy consumption.} 
Figures \ref{fig:memory_cost} and \ref{fig:power_energy} compare FedEL with baselines in terms of memory usage, power consumption, and energy consumption, as measured using the Jetson Power GUI on Xavier and Orin devices. Since the differences in measurements between the two devices are negligible, we present the averaged results to save space.
As shown in Figure \ref{fig:memory_cost}, FedEL reduces memory usage by up to 32.7\% compared to FedAvg. This improvement stems from training only a portion of the DNN model while freezing unselected layers and tensors, which minimizes memory allocation required for gradient backpropagation.
In Figure \ref{fig:power_energy}, we observe little variation in power consumption across methods, as both Orin and Xavier operate at full power when their GPUs are active. However, for the same set of computational tasks, FedEL significantly reduces energy consumption. FedEL achieves an average reduction of 49.59\% in total energy usage compared to FedAvg, primarily because it completes training in nearly half the time required by FedAvg.

\begin{figure*}[ht]
  \centering
  \begin{subfigure}{0.23\linewidth}
    \includegraphics[width=\linewidth]{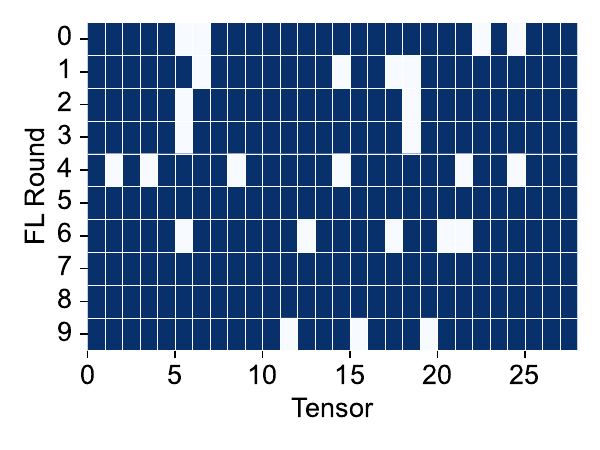}
    \caption{Orin}
    \label{fig:selection_Orin}
  \end{subfigure}
  \hfill
  \begin{subfigure}{0.23\linewidth}
    \includegraphics[width=\linewidth]{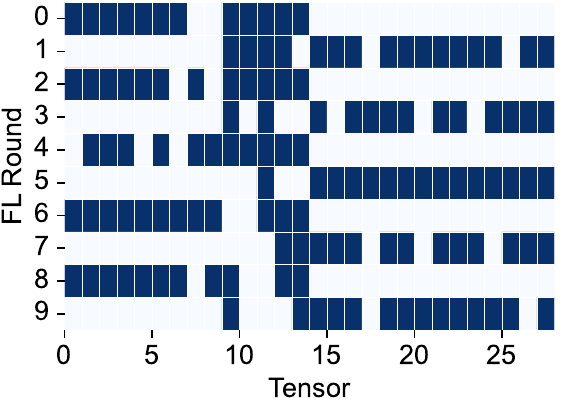}
    \caption{1/2 Orin}
    \label{fig:selection_Orin_1_2}
  \end{subfigure}
    \hfill
  \begin{subfigure}{0.23\linewidth}
    \includegraphics[width=\linewidth]{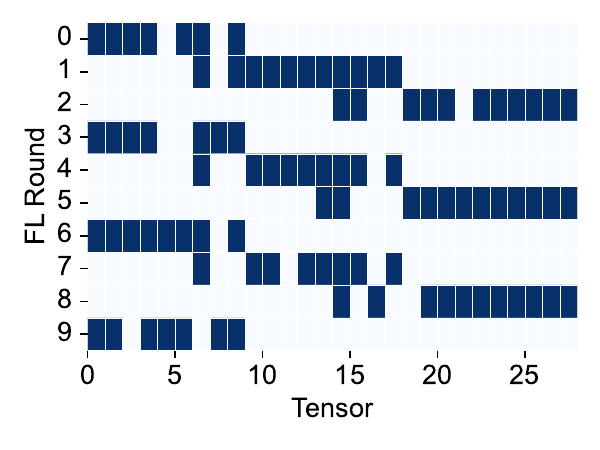}
    \caption{1/3 Orin}
    \label{fig:selection_Orin_1_3}
  \end{subfigure}
    \hfill
  \begin{subfigure}{0.23\linewidth}
    \includegraphics[width=\linewidth]{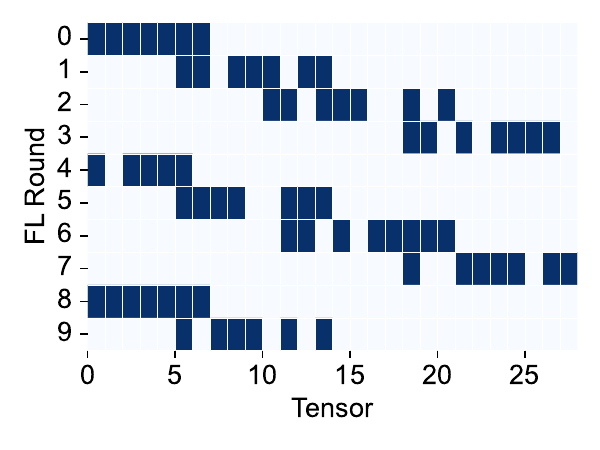}
    \caption{1/4 Orin}
    \label{fig:selection_Orin_1_4}
  \end{subfigure}
  \caption{Tensor selections during different FL rounds}
  \label{fig:selection_100devices}
  \vspace{-10 pt}
\end{figure*}
\textbf{FedEL can adaptively select important tensors.} FedEL's performance is driven by its dynamic sliding-window mechanism and elastic tensor selection at runtime. We analyze these adaptive behaviors using a large-scale 100-device scenario with the Tiny ImageNet dataset. Figure \ref{fig:selection_100devices} showcases representative devices from each of the four device types.
As observed, the number of windows required to train the full model varies across devices due to their differing computational capabilities. Within each window, tensor selection is dynamically adjusted based on importance. For instance, if a tensor at the front is critical for model performance, FedEL can adaptively skip updating a few less important tensors (with higher indices) to maintain the desired training speedup while preserving model effectiveness.

\vspace{-10pt}
\subsection{Ablation}
\vspace{-5pt}
We analyze how parameter settings influence FedEL using the small-scale practical 10-device scenario with the CIFAR10 dataset for image classification. Additional results can be found in the Appendix.

\textbf{Impact of balancing parameter $\beta$.} 
The balancing parameter $\beta$ in FedEL determines the weighting between local and global tensor importance during adjustment. Figure \ref{fig:impact_balance_parameter_beta} shows how varying $\beta$ affects time-to-accuracy performance. A larger $\beta$ overemphasizes local tensor importance, reducing the influence of global model variations, while a smaller $\beta$ focuses solely on global variations, neglecting local data heterogeneity.
As shown in Figure \ref{fig:impact_balance_parameter_beta}, when $\beta = 1$ (fully local focus) or $\beta = 0$ (fully global focus), FedEL's accuracy falls below that of FedAvg. In contrast, moderate values of $\beta$ (e.g., $\beta = 0.6$ or $\beta = 0.4$) outperform FedAvg by balancing local data heterogeneity with global model variations. This balance allows FedEL to effectively capture both local and global tensor importance, enhancing accuracy.

\begin{wrapfigure}{r}{0.55\textwidth}
	\centering
\begin{minipage}[t]{0.44\linewidth}
    \includegraphics[width=\textwidth]{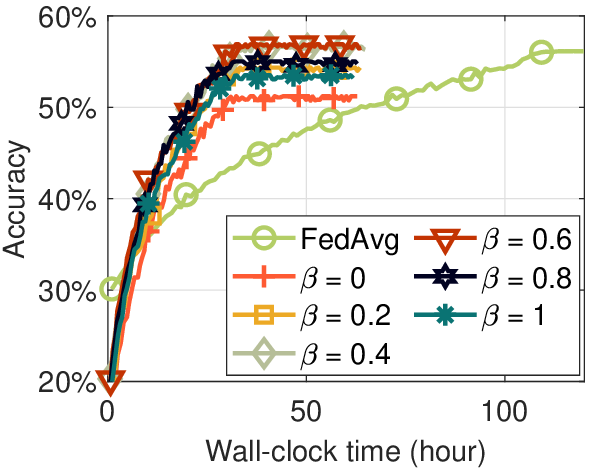}
    \vspace{-10 pt}
  \caption{Impact of $\beta$.}
  \label{fig:impact_balance_parameter_beta}
	\end{minipage}
    \hspace{0.01\linewidth}
        \begin{minipage}[t]{0.44\linewidth}
  \includegraphics[width=\textwidth]{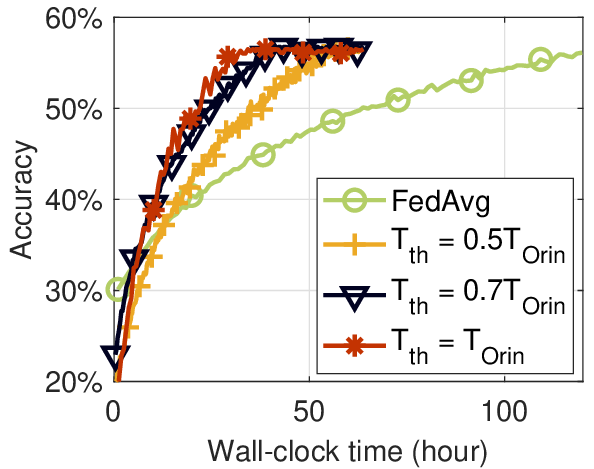}
		\vspace{-10 pt}
		\caption{Impact of $T_{th}$.}
		\label{fig:impact_time_threshold}
	\end{minipage}
    \vspace{-10 pt}
\end{wrapfigure}

\textbf{Impact of runtime threshold $T_{th}$.}
To ensure a fair comparison with other baselines, we set the training time threshold $T_{th}$ equal to the full model training time of the Orin (i.e., $T_{Orin}$). We vary $T_{th}$ to examine its impact on FedEL’s performance, with the experiment stopping once the training time reaches the predefined value.
As shown in Figure \ref{fig:impact_time_threshold}, a smaller $T_{th}$ slows down convergence. This is because slow clients must train the entire model, leading to more sliding-window movements, while fast clients also perform additional window sliding, increasing the overall training time and reducing efficiency.

\begin{wrapfigure}{r}{0.55\textwidth}
	\centering
\begin{minipage}[t]{0.43\linewidth}
		\includegraphics[width=\textwidth]{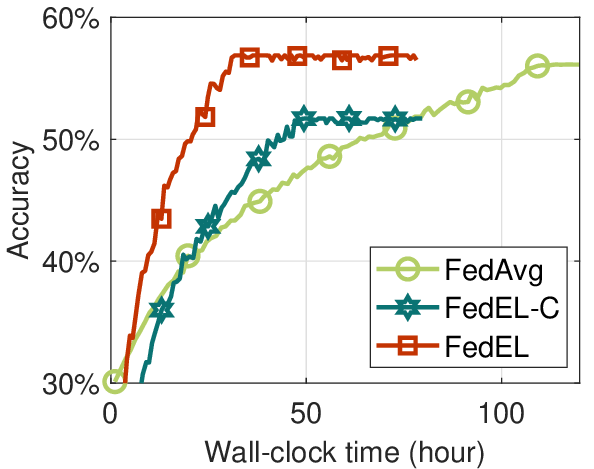}
		\vspace{-10 pt}
		\caption{Time-to-accuracy of FedAvg, FedEL-C and FedEL.}
		\label{fig:impact_end_edge_movement}
	\end{minipage}
    \hspace{0.01\linewidth}
        \begin{minipage}[t]{0.45\linewidth}
		\includegraphics[width=\textwidth]{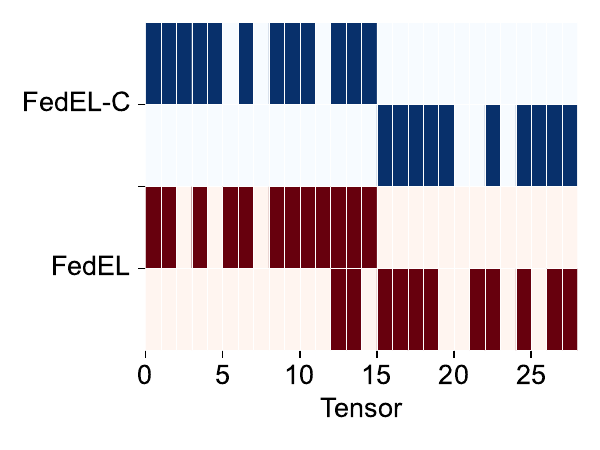}
		\vspace{-10 pt}
		\caption{Tensor selection illustration in FedEL-C and FedEL.}
		\label{fig:slection_fedel_fedelC}
	\end{minipage}
    \vspace{-10 pt}
\end{wrapfigure}
\textbf{Sliding Window.} 
The sliding window consists of two processes: the front edge movement and the end edge movement, which define the window size and the range of selected important tensors. In each FL round, the front edge includes blocks with accumulated training time just above the runtime threshold $T_{th}$. As shown in Figure \ref{fig:impact_time_threshold}, reducing $T_{th}$ slows convergence, as more rounds are required to train the full model.
The end edge movement reduces the window size by excluding unselected blocks. To assess its effectiveness, we compare it with a scenario where the end edge is directly moved to the current front edge (FedEL-C). As shown in Figure \ref{fig:impact_end_edge_movement}, FedEL-C results in lower accuracy than FedEL. The tensor selection examples in Figure \ref{fig:slection_fedel_fedelC} explain this: FedEL-C treats each window independently and does not adjust training tensors between consecutive windows, leading to accuracy degradation.

\vspace{-10pt}
\section{Conclusion}
\vspace{-10pt}
We introduced FedEL, a progressive training approach to address client heterogeneity in FL. To overcome the limitations of directly selecting important tensors, we propose two innovations: sliding-window training and local tensor importance adjustment.
Sliding-window training enables FedEL to train the full DNN model by adjusting the front and end edges of the training window. Local tensor importance adjustment selects important tensors based on both local client data and global data importance. The results show that FedEL reduces wall clock training time (speeding up by $1.87\times$ to $3.87\times$) while achieving comparable or better accuracy and perplexity across various FL applications and DNN models.

\bibliography{example_paper}
\bibliographystyle{splncs04}

\newpage
\appendix
\section{The Algorithm of FedEL} \label{appendix:algorithm}
In this paper, we introduce the sliding window training to address the first limitation and tensor importance adjustment to overcome the second limitation. We present a comprehensive window-based important tensor selection scheme implemented by FedEL, as outlined in Algorithm \ref{alg:fedel}. Specifically, prior to the FL process, each client performs offline tensor time profiling for the DNN model (Lines 3-5), which is done only once. In each online FL round, once the client receives the broadcasted global model, it evaluates the tensor importance for the current global model (Line 8), calculates the global tensor importance (Line 9), and adjusts the local tensor importance accordingly (Line 10). Based on the previous round's training status, FedEL then slides or resets the window to ensure the entire DNN model is trained (Line 11). Once the window is fixed, ElasticTrainer is applied within the window to select important tensors, freeze unselected ones, and train only the selected tensors (Lines 12-13). Finally, the server aggregates the models from all clients and broadcasts the updated global model for the next FL round.
\begin{algorithm}
	\caption{FedEL}
	\begin{algorithmic}[1]
		\State  \textbf{Input:} Client set $\mathcal{N}$, training time threshold $T_{th}$, balance parameter $\beta$, DNN model $\boldsymbol{w}$.
        \State \textbf{Output:} Trained model.
        \LineComment {{\color{red}Offline and only once}}
        \For {Each client n:}
        \State TensorTimeProfiling$(\boldsymbol{w})$. 
        \EndFor
        \LineComment {{\color{red}Online}}
        \For {Each FL round r:}
        \For {Each client n:}
            \State $\boldsymbol{I}_{n,r}$ = TensorImportanceEvaluation$(\boldsymbol{w}_r)$ 
             \LineComment {{\color{red}Tensor Importance adjustment}}
            \State $\boldsymbol{I}^g$ = GetGlobalTensorImportance$(\boldsymbol{w}_{r}, \boldsymbol{w}_{r-1}, \eta_n)$
            \State $\boldsymbol{I}_{n,r}$ = AdjustLocalTensorImportance$(\boldsymbol{I}_{n,r}, \beta, \boldsymbol{I}^g)$
            \LineComment {{\color{red}Window Sliding}}
            \State $\Theta_{n,r}$ = SlideWindow$(\boldsymbol{w}_{r}, T_{th}, \Theta_{n,r-1})$
            \LineComment {{\color{red}Elastic Training}}
            \State $\boldsymbol{A}_{n, r}$ = SelectImportantTensor$(\Theta_{n,r}, T_{th}, \boldsymbol{I}_{n,r})$
            \State $\boldsymbol{w}_{n,r}$ = TrainImportantTensor$(\boldsymbol{A}_{n, r}, \boldsymbol{w}_{r})$
        \EndFor
        \State $\boldsymbol{w}_{r+1}$ = Aggregate($\boldsymbol{w}_{n,r}$) \Comment{{\color{red}Server side}}
        \EndFor
	\end{algorithmic}
	\label{alg:fedel}
\end{algorithm}

\section{Detailed Datasets and Baselines}
\textbf{Baselines.} The following baselines are adopted for evaluation purposes: 

(1) \textbf{FedAvg} \cite{mcmahan2017communication} is the classic generic FL algorithm without accounting for system heterogeneity. Each client trains the same full DNN model. 

(2) \textbf{ElasticTrainer} \cite{huang2023elastictrainer} is directly deployed into the local training clients of FedAvg framework. 

(3) \textbf{HeteroFL} \cite{diao2020heterofl} facilitates training across heterogeneous devices by scaling the channels of convolutional layers to cater to diverse computation constraints.

(4) \textbf{DepthFL} \cite{kim2023depthfl} segments the model into sub-models of varying depths, distributing them to clients according to their computing capabilities.

(5) \textbf{PyramidFL} \cite{li2022pyramidfl} aims to enhance time-to-accuracy by considering both data and system heterogeneity during binary client selection. 

(6) \textbf{TimelyFL} \cite{zhang2023timelyfl} is a heterogeneity-aware asynchronous FL framework with adaptive partial training.

(7) \textbf{FIARSE} \cite{okon2020natural} dynamically masks the unimportant layers with adaptive partial training.

\textbf{Datasets, Models, and Tasks.} To demonstrate FedEL's effectiveness across tasks, datasets, and ML models, we evaluate FedEL on four real-world datasets designed for FL applications
at different scales. To follow the realistic non-iid data in FL scenarios, we partition the datasets into different clusters using a Dirichlet distribution with $\alpha$ equals 0.1.
\\
$\bullet$ \textbf{Image Classification.} The CIFAR10 dataset \cite{krizhevsky2009learning} consists of 60,000 colored images in 10 classes. The Tiny ImageNet dataset \cite{le2015tiny} contains 100000 images of 200 classes colored images. We evaluate the dataset with VGG16 \cite{simonyan2014very} model.
\\
$\bullet$ \textbf{Speech Recognition.} The Google Command speech
dataset \cite{warden2018speech} covers 105,829 audio commands recordings. The data set is composed of 35 common words from the everyday vocabulary, such as ”Yes”, ”No”, ”Up”, and ”Down”. We evaluate the datasets with ResNet50 \cite{he2016deep} model for a 35-class keyword spotting task.
\\
$\bullet$ \textbf{Natural Language Processing.} Reddit \cite{okon2020natural} consists of comments from 1,660,820 users in the Reddit forum.
In this dataset, we filter the users with less than 20
words in total and restrict to the 30k most frequently
used words, as the same settings in the previous work
\cite{lai2022fedscale}. Then, we fine turn the lightweight Albert \cite{lan2019albert} model for the next-word-prediction task. The performance is evaluated by the perplexity loss, which lower is
better. It’s worth noting that Reddit datasets inherently exhibits non-iid characteristics. We follow \cite{xie2021elbert} to  generate the blocks of the lightweight Albert model.

\begin{figure*}[h!]
  \centering
    \begin{subfigure}{0.23\linewidth}
\includegraphics[width=\linewidth]{figures/impact_balance_parameter_beta.eps}
    \caption{CIFAR10}
  \end{subfigure}
  \hfill
  \begin{subfigure}{0.23\linewidth}
\includegraphics[width=\linewidth]{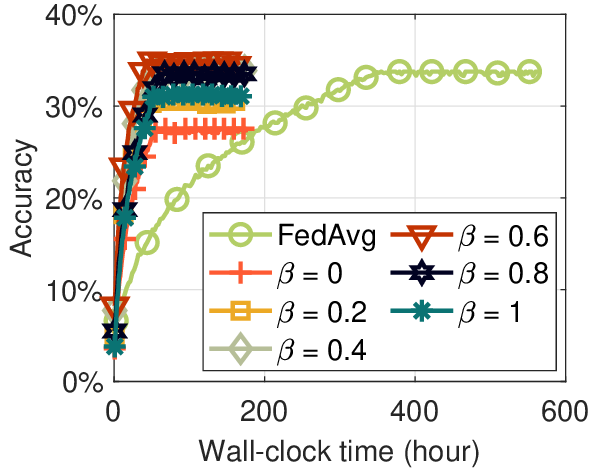}
    \caption{Tiny ImageNet}
    \label{fig:impact_balance_parameter_beta_tiny_imagenet}
  \end{subfigure}
  \hfill
  \begin{subfigure}{0.23\linewidth}
\includegraphics[width=\linewidth]{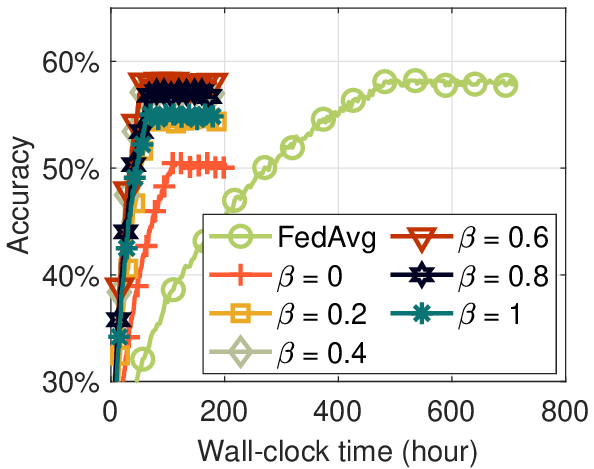}
    \caption{Google Speech}
    \label{fig:impact_balance_parameter_beta_google}
  \end{subfigure}
    \hfill
  \begin{subfigure}{0.23\linewidth}
\includegraphics[width=\linewidth]{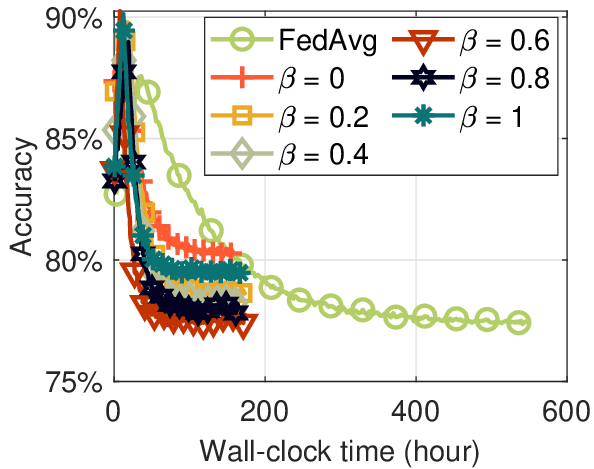}
    \caption{Reddit}
    \label{fig:impact_balance_parameter_beta_nlp}
  \end{subfigure}
  \caption{Impact of balancing parameter $\beta$ on four tasks.}
  \label{fig:impact_balance_parameter_beta_other_three_datasets}
\end{figure*}

\begin{figure*}[h!]
  \centering
    \begin{subfigure}{0.23\linewidth}
\includegraphics[width=\linewidth]{figures/impact_time_threshold.eps}
    \caption{CIFAR10}
  \end{subfigure}
  \hfill
  \begin{subfigure}{0.23\linewidth}
\includegraphics[width=\linewidth]{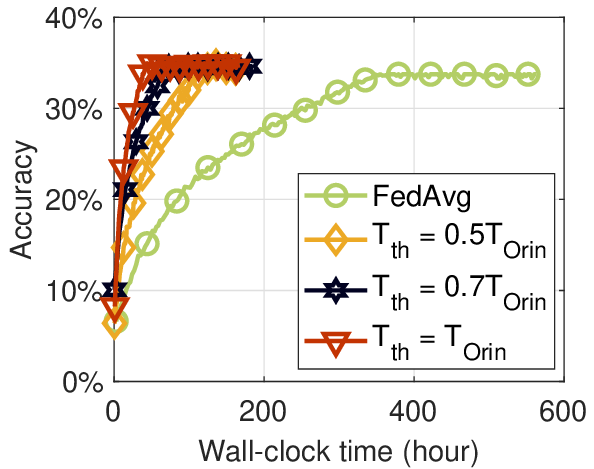}
    \caption{Tiny ImageNet}
    \label{fig:impact_time_threshold_tiny_imageneet}
  \end{subfigure}
  \hfill
  \begin{subfigure}{0.23\linewidth}
\includegraphics[width=\linewidth]{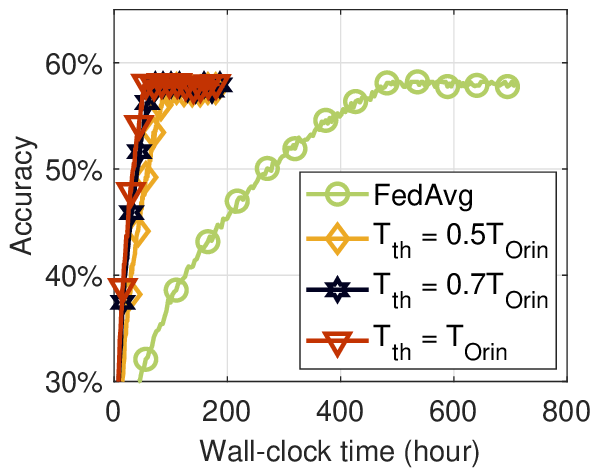}
    \caption{Google Speech}
    \label{fig:impact_time_threshold_google}
  \end{subfigure}
    \hfill
  \begin{subfigure}{0.23\linewidth}
\includegraphics[width=\linewidth]{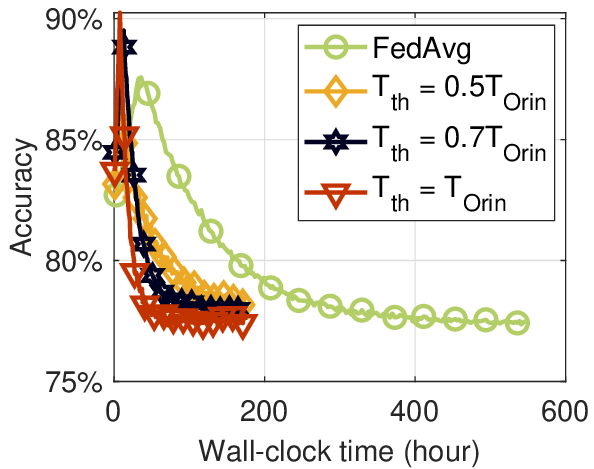}
    \caption{Reddit}
    \label{fig:impact_time_threshold_nlp}
  \end{subfigure}
  \caption{Impact of runtime threshold $T_{th}$ on four tasks.}
  \label{fig:impact_time_threshold_other_three_datasets}
\end{figure*}

\begin{figure*}[h!]
  \centering
    \begin{subfigure}{0.23\linewidth}
\includegraphics[width=\linewidth]{figures/impact_end_edge_movement.eps}
    \caption{CIFAR10}
  \end{subfigure}
  \hfill
  \begin{subfigure}{0.23\linewidth}
\includegraphics[width=\linewidth]{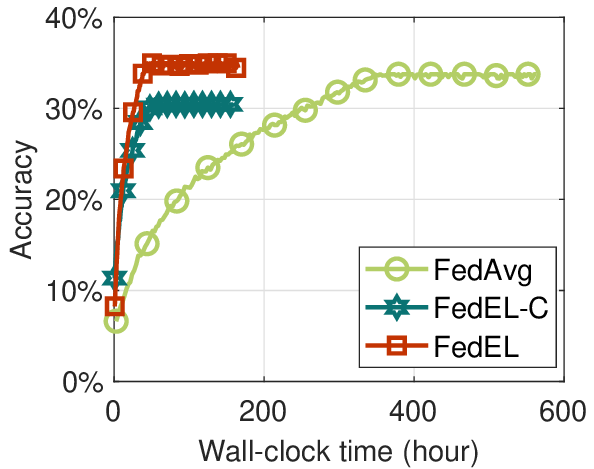}
    \caption{Tiny ImageNet}
    \label{fig:impact_end_edge_movement_tiny_imagenet}
  \end{subfigure}
  \hfill
  \begin{subfigure}{0.23\linewidth}
\includegraphics[width=\linewidth]{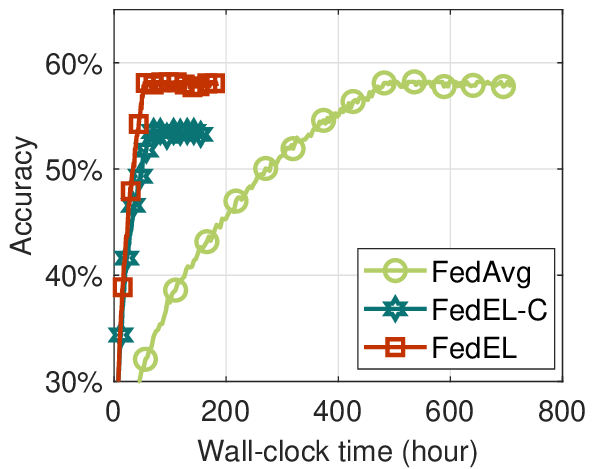}
    \caption{Google Speech}
    \label{fig:impact_end_edge_movement_google}
  \end{subfigure}
    \hfill
  \begin{subfigure}{0.23\linewidth}
\includegraphics[width=\linewidth]{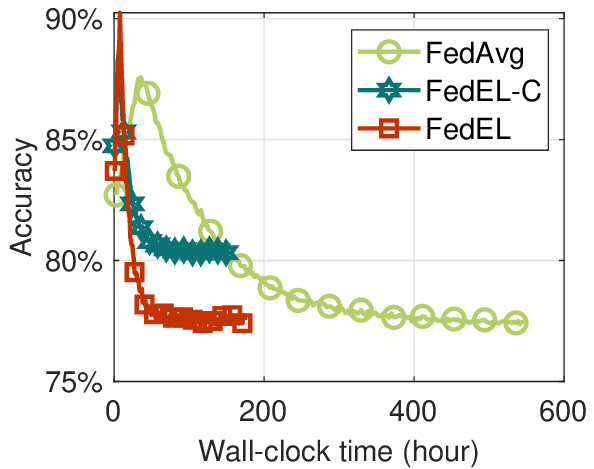}
    \caption{Reddit}
    \label{fig:impact_end_edge_movement_nlp}
  \end{subfigure}
  \caption{Time-to-accuracy of
FedAvg, FedEL-C and FedEL on four tasks.}
  \label{fig:impact_end_edge_movement_other_three_datasets}
\end{figure*}

\subsection{Ablation} In the Ablation section of the main paper, we analyze the effect of parameter settings on FedEL using the CIFAR10 dataset. Here, we show the remaining ablation results for other three tasks in a large 100-device scenario.

\textbf{Impact of balancing parameter $\beta$.} 
Figure \ref{fig:impact_balance_parameter_beta_other_three_datasets} illustrates the impact of varying $\beta$ on time-to-accuracy performance across the Tiny ImageNet, Google Speech, and Reddit datasets. In FedEL, the balancing parameter $\beta$ controls the trade-off between local and global tensor importance during adjustment.
A larger $\beta$ places greater emphasis on local tensor importance, reducing the influence of global model variations. Conversely, a smaller $\beta$ prioritizes global variations while neglecting local data heterogeneity. When $\beta = 1$ (fully local) or $\beta = 0$ (fully global), FedEL achieves lower accuracy than FedAvg. However, with moderate values ($\beta = 0.4$ or $\beta = 0.6$), FedEL outperforms FedAvg by effectively balancing local heterogeneity with global model updates. This balance enables FedEL to capture both local and global tensor importance, leading to improved accuracy.

\textbf{Impact of runtime threshold $T_{th}$.} 
Figure \ref{fig:impact_time_threshold_other_three_datasets} illustrates how varying the runtime threshold 
$T_{th}$ affects performance across three additional tasks in a 100-device scenario.
To ensure a fair comparison with baseline methods, we set 
$T_{th}$ equal to the full model training time on fastest device. We then vary $T_{th}$ to analyze its impact, stopping the experiment once the total training time reaches the predefined limit. As shown in Figure \ref{fig:impact_time_threshold_other_three_datasets}, a smaller 
$T_{th}$ slows convergence. This occurs because slow clients must train the entire model, requiring more sliding-window movements and fast clients also perform additional window sliding, increasing overall training time and reducing efficiency.

\textbf{Sliding Window.} 
The sliding window operates through two processes.
Front edge movement: Expands the window by including blocks until their cumulative training time slightly exceeds $T_{th}$.
End edge movement: Shrinks the window by excluding unselected blocks.
As shown in Figure \ref{fig:impact_end_edge_movement_other_three_datasets}, reducing $T_{th}$ results in slower convergence, as more rounds are required to train the full model.
To evaluate the effectiveness of end edge movement, we compare it with a variant called FedEL-C, where the end edge is immediately shifted to the current front edge. Figure \ref{fig:impact_end_edge_movement_other_three_datasets} shows that FedEL-C leads to lower accuracy than FedEL, highlighting the importance of gradual end edge adjustments for maintaining model performance. This is because FedEL-C treats each window independently and does not adjust training tensors between consecutive windows, leading to accuracy degradation.

\subsection{Important Tensor Selection}
In the main paper, we demonstrated tensor selection in a large-scale scenario with 100 devices, using the VGG16 model on the Tiny ImageNet dataset. Here, we present tensor selection results for additional tasks.
Figure \ref{fig:tensor_selection_cifar} illustrates tensor selection on VGG16 with the CIFAR10 dataset for representative Orin and Xavier devices.
Figure \ref{fig:tensor_selection_speech} shows results for ResNet50 on the Google speech dataset, using representative devices from each of the four device types.
Figure \ref{fig:tensor_selection_nlp} presents tensor selection for fine-tuning the Albert model on the Reddit dataset. Specifically, we freeze the pre-trained albert-base-v2 model and train only the newly added output layers.
As observed, the number of windows required to train the full model varies across devices due to their differing computational capabilities. Within each window, tensor selection is dynamically adjusted based on importance. For instance, if a tensor in an earlier layer is critical for model performance, FedEL can adaptively skip updating certain less important tensors (with higher indices). This ensures an optimal balance between training speedup and model effectiveness.

\begin{figure}[h!]
  \centering
  \begin{subfigure}{0.47\linewidth}
    \includegraphics[width=\linewidth]{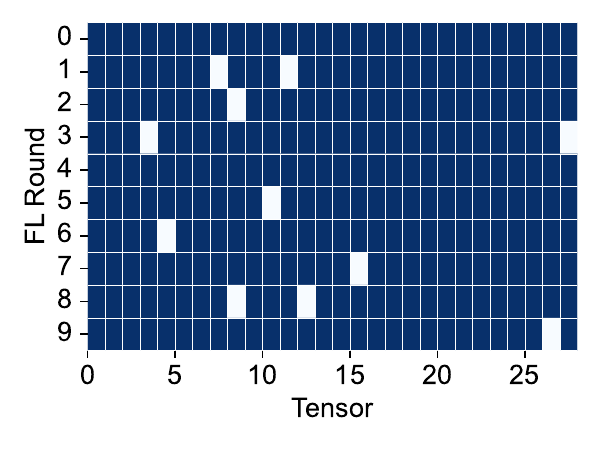}
    \caption{Orin}
    \label{fig:tensor_selection_cifar_orin}
  \end{subfigure}
  \begin{subfigure}{0.47\linewidth}
    \includegraphics[width=\linewidth]{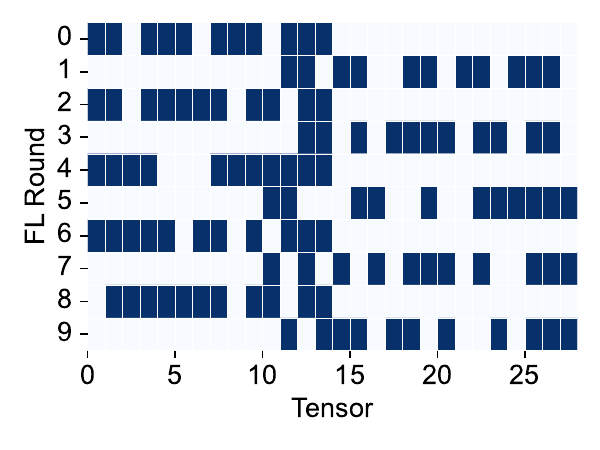}
    \caption{Xavier}
    \label{fig:tensor_selection_cifar_xavier}
  \end{subfigure}
  \caption{Tensor selection of CIFAR10 dataset.}
  \label{fig:tensor_selection_cifar}
  \vspace{-10pt}
\end{figure}

\begin{figure*}[h!]
  \centering
  \begin{subfigure}{0.45\linewidth}
    \includegraphics[width=\linewidth]{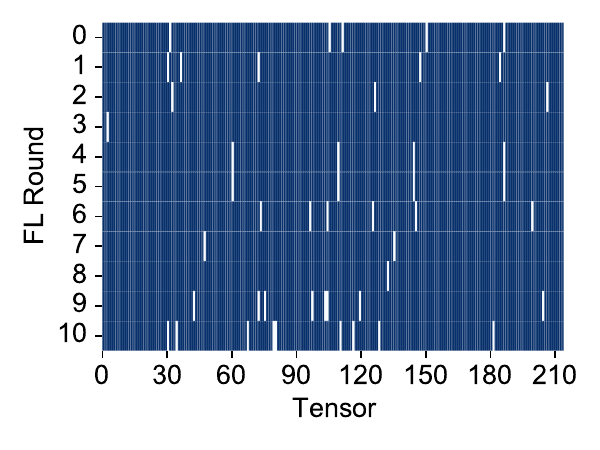}
    \caption{Orin}
    \label{fig:tensor_selection_speech_orin}
  \end{subfigure}
  \hfill
  \begin{subfigure}{0.45\linewidth}
    \includegraphics[width=\linewidth]{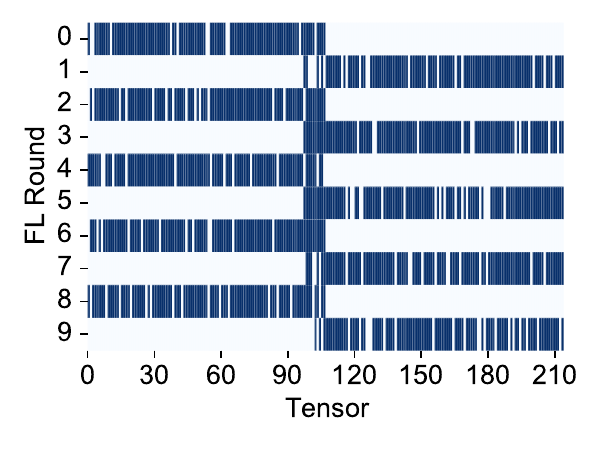}
    \caption{1/2 Orin}
    \label{fig:tensor_selection_speech_orin12}
  \end{subfigure}
    \hfill
  \begin{subfigure}{0.45\linewidth}
    \includegraphics[width=\linewidth]{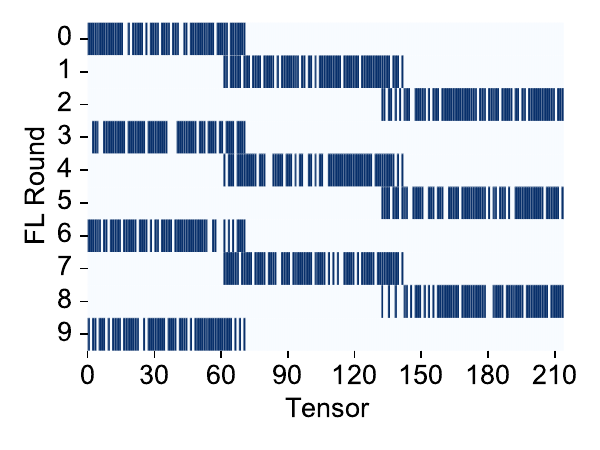}
    \caption{1/3 Orin}
    \label{fig:tensor_selection_speech_orin13}
  \end{subfigure}
    \hfill
  \begin{subfigure}{0.45\linewidth}
    \includegraphics[width=\linewidth]{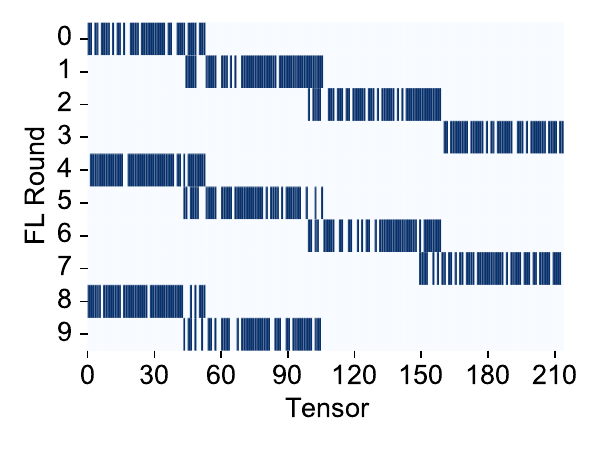}
    \caption{1/4 Orin}
    \label{fig:tensor_selection_speech_orin14}
  \end{subfigure}
  \caption{Tensor selection of Google Speech dataset.}
  \label{fig:tensor_selection_speech}
  \vspace{-0.3cm}
\end{figure*}

\begin{figure*}[h!]
  \centering
  \begin{subfigure}{0.45\linewidth}
    \includegraphics[width=\linewidth]{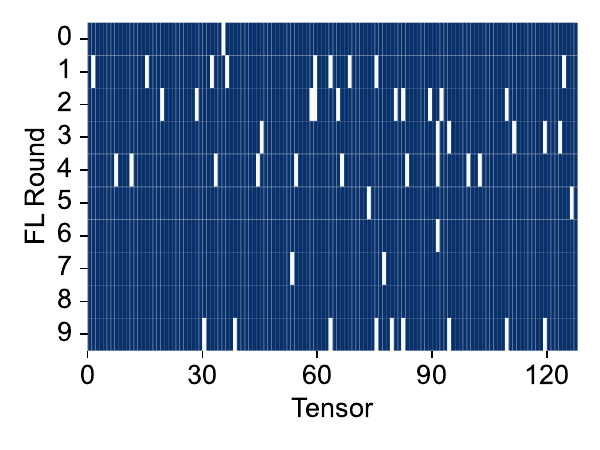}
    \caption{Orin}
    \label{fig:tensor_selection_nlp_orin}
  \end{subfigure}
  \hfill
  \begin{subfigure}{0.45\linewidth}
    \includegraphics[width=\linewidth]{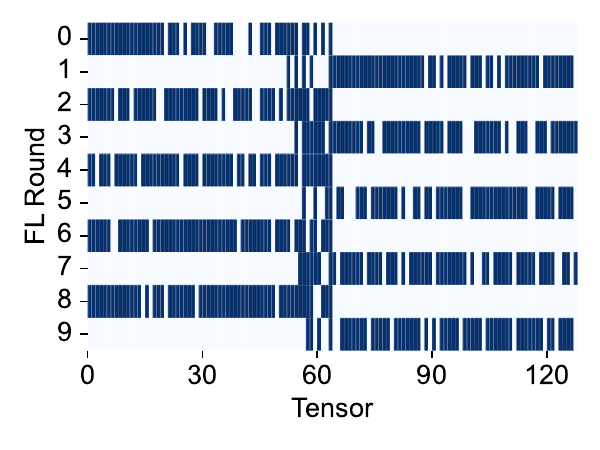}
    \caption{1/2 Orin}
    \label{fig:tensor_selection_nlp_orin12}
  \end{subfigure}
    \hfill
  \begin{subfigure}{0.45\linewidth}
    \includegraphics[width=\linewidth]{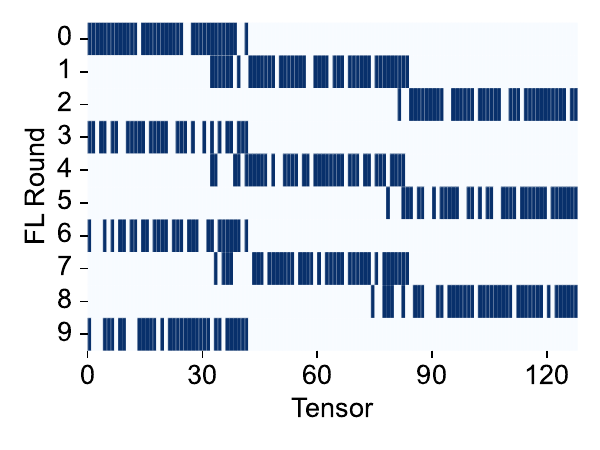}
    \caption{1/3 Orin}
    \label{fig:tensor_selection_nlp_orin13}
  \end{subfigure}
    \hfill
  \begin{subfigure}{0.45\linewidth}
    \includegraphics[width=\linewidth]{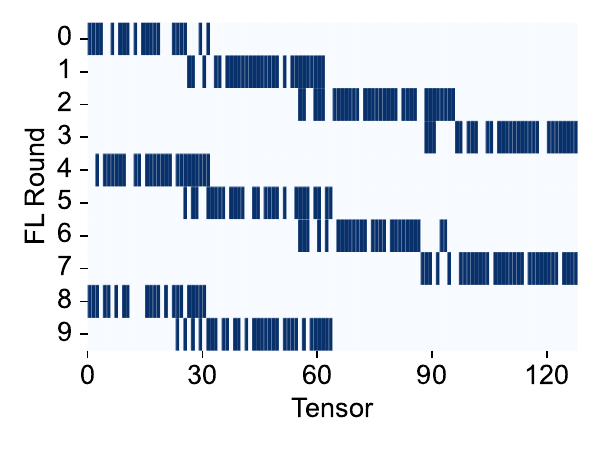}
    \caption{1/4 Orin}
    \label{fig:tensor_selection_nlp_orin14}
  \end{subfigure}
  \caption{Tensor selection of Reddit dataset.}
  \label{fig:tensor_selection_nlp}
  \vspace{-0.3cm}
\end{figure*}

\subsection{How much does the training time deviate from the target time $T_{th}$?}
The differences in model architectures contribute to deviations between FedEL's training time and $T_{th}$.
The table \ref{tab:deviate} presents the per-round average training time of FedEL compared to 
$T_{th}$. As observed, for convolutional networks (i.e., VGG16 and ResNet50), the deviation ranges from $3.2\%$ to $6.8\%$, whereas for the LLM model (i.e., Albert), the deviation is $18.9\%$. Despite these variations, FedEL significantly accelerates training compared to FedAvg full-model training, achieving a $1.87\times$ speedup in a small-scale practical edge device scenario and a $3.13\times$ to $3.87\times$ speedup in a large-scale simulation scenario.

\begin{table}[ht]
    \centering
    \caption{Deviation between the training time and $T_{th}$.}
    \begin{tabular}{ccccc}
    \hline
           & CIFAR10 & Tiny ImageNet & Google speech & Reddit\\
    \hline
         FedEL &  38.2min &  45.1min & 54.9min & 48.6min\\
         $T_{th}$ &  36.0min & 42.2min & 53.2min & 40.9min\\
         Difference & 6.1\% & 6.8\% & 3.2\% & 18.9\%\\
    \hline
        FedAvg & 71.8min & 161.9min & 212.9min & 152.1min\\
        Speedup & \cellcolor{gray!25}$\mathbf{1.87\times}$ & \cellcolor{gray!25}$\mathbf{3.59\times}$ & \cellcolor{gray!25}$\mathbf{3.87\times}$ & \cellcolor{gray!25}$\mathbf{3.13\times}$\\
    \hline
    \end{tabular}
    \label{tab:deviate}
\end{table}

\subsection{FedEL with particular algorithms which try
to address any data non-IIDness.}
To assess FedEL’s compatibility with aggregation algorithms beyond FedAvg, we integrated it with FedProx \cite{li2020federated} and FedNova \cite{wang2020tackling}, both designed for non-IID data scenarios. Following their official implementations, we modified local updates and global aggregation to incorporate FedEL’s adaptive tensor selection.

The table below compares the performance of FedProx/FedNova with and without FedEL on CIFAR10 dataset. As shown, FedEL is not restricted to FedAvg; it can be seamlessly integrated into other FL aggregation methods, leveraging their advantages while mitigating their limitations, particularly in heterogeneous device environments.

\begin{table}[ht]
    \centering
    \caption{Time-to-accuracy for combining FedProx and FedNova with our FedEL.}
    \begin{tabular}{cccc}
    \hline
         Method  & Acc & Time & Speedup\\
    \hline
         FedProx & $56.1\%$ & 82.3h & N/A\\
         FedProx + FedEL & $56.6\%$ & 45.4h & 1.81$\times$\\
    \hline
         FedNova & $66.3\%$ & 84.7h & N/A\\
         FedNova + FedEL & $66.1\%$ & 47.8h & 1.77$\times$ \\
    \hline
    \end{tabular}  
    \label{tab:fedprox_vs_fedel}
\end{table}

\subsection{Statistical Comparison.} \label{appendix:StatisticalComparison}
To confirm the significance of our accuracy improvements, we provide a detailed statistical analysis, including confidence intervals. As shown in our box plot Figure \ref{fig:statistical_comparison}, the confidence intervals indicate that our method maintains or exceeds accuracy with statistically significant improvements over baseline methods.
\begin{figure*}[ht]
  \centering
  \begin{subfigure}{0.24\linewidth}
    \includegraphics[width=\linewidth]{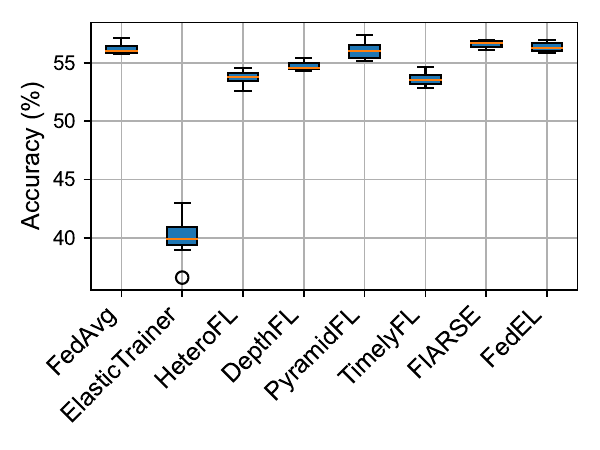}
    \caption{CIFAR10}
    \label{fig:cifar10_box}
  \end{subfigure}
  \hfill
  \begin{subfigure}{0.24\linewidth}
    \includegraphics[width=\linewidth]{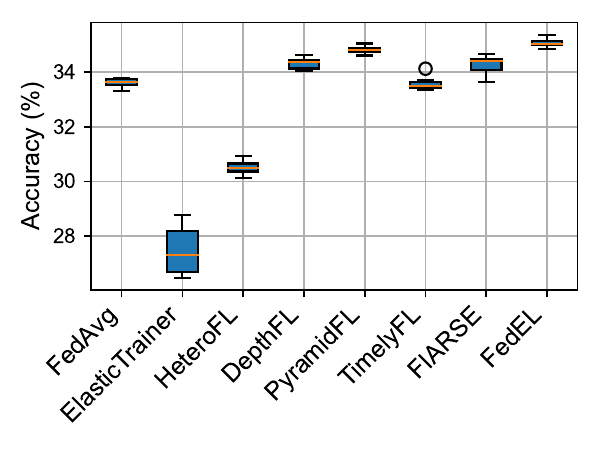}
    \caption{Tiny ImageNet}
    \label{fig:tinyImage_box}
  \end{subfigure}
    \hfill
  \begin{subfigure}{0.24\linewidth}
    \includegraphics[width=\linewidth]{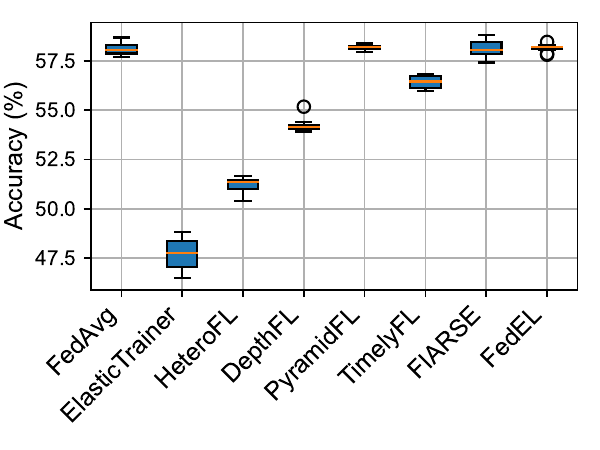}
    \caption{Google Speech}
    \label{fig:speech_box}
  \end{subfigure}
    \hfill
  \begin{subfigure}{0.24\linewidth}
    \includegraphics[width=\linewidth]{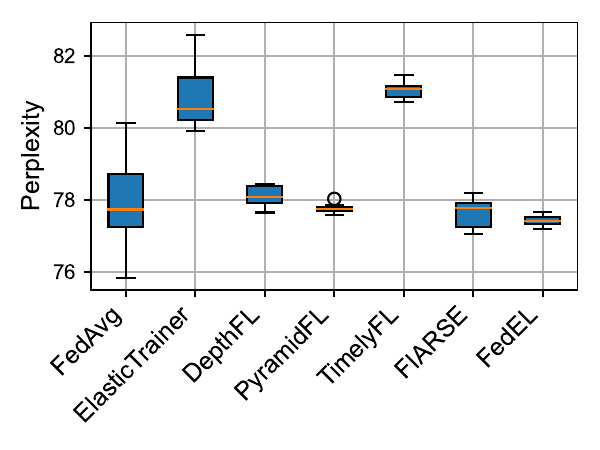}
    \caption{Reddit}
    \label{fig:nlp_box}
  \end{subfigure}
  \caption{Accuracy statistical comparison.}
  \label{fig:statistical_comparison}
\end{figure*}

\subsection{Does the method rolling back blocks if necessary?}
\begin{table}[ht]
    \centering
    \caption{The value of the O1 term in the theoretical convergence upper bound for FedEL is analyzed for both cases: with and without rollback.}
    \begin{tabular}{cccc}
    \hline
         Method  & $O_1$ mean & $O_1$ std\\
    \hline
         Rollback & 63.06 &	8.62\\
         Not Rollback & 78.18 & 2.62\\
    \hline
    \end{tabular}
    \label{tab:O1_stat}
\end{table}

The rollback mechanism in sliding window training ensures that earlier layers can be retrained, allowing the model to refine learned representations rather than reinforcing suboptimal updates. This is particularly beneficial because deeper layers rely on feature representations from earlier layers. If earlier layers contain suboptimal representations, they can propagate errors throughout the network. By rolling back, the model can correct these errors and improve generalization, leading to more stable and effective learning.

In the convergence theorem of FedEL (Appendix \ref{appendix:convergence}), tensor selection introduces an additional bias term O1.
To analyze the impact of rollback, we designed two training scenarios:

1. Sliding window training with rollback, where layers can be revisited and updated.

2. Sliding window training without rollback, where the window shifts forward after a fixed number of rounds without revisiting earlier layers.

Table \ref{tab:O1_stat} presents the statistical values of the bias term 
O1 for both cases. As shown, the average value of O1 is smaller when rollback is allowed, compared to when it is not. This provides theoretical evidence that rolling back layers reduces the upper bound of convergence, leading to more stable and efficient training.

\section{Limitation} \label{appendix:limitation}
To evaluate the effectiveness of FedEL, we conduct experiments in two settings: a small-scale practical setup using real edge devices, and a large-scale simulation. Due to hardware limitations, the practical setup includes only two types of edge devices. The large-scale simulation is then designed based on system measurements collected from these two devices.
While this approach demonstrates promising results, it may face challenges when scaled to real-world environments with more extreme heterogeneity in client computational resources. Additionally, this work does not account for variations in client network bandwidth, which we plan to explore in future work.

\section{Convergence Theorem.} \label{appendix:convergence}
We consider one server and N edge devices. Each device $n \in \mathcal {N} = \{1, 2, \dots N\}$ has its own set of local data samples $\mathcal{D}_n$. In a supervised learning setting each device aims to find a learning model $\theta_n \in \mathbb{R}^{d}$, where $d_{\theta}$ denote the dimensions of the model. A mask $A_n \in \{0, 1\}^{d_{\theta}}$ is selected for each device $n \in \mathcal{N}$ based on the ElasticTrainer. During local update, each device n only updates those parameters in the global model that correspond to non-zero values of the masking vector $A_m$.

Let $w_n^i$ denote the local model of device $n$ at the beginning of local update iteration $i$ in training round $t$. The local model of device $n$ is updated using SGD as follows:
\begin{align} \label{eq:sgd}
    \boldsymbol{w}_n^{i+1}(t) = \boldsymbol{w}_n^{i}(t) - \eta(t) \boldsymbol{A}_n(t) \odot \nabla f_n( \boldsymbol{w}_n^{i}(t), b_n^i(t)), i = 1, \dots, \tau
\end{align}
where $\odot$ denotes the element-wise product, $\eta(t)$ is the learning rate, $f_n(\cdot)$ is the loss function and $b_n^i(t)$ is the local batch sample chosen uniformly at random from the local dataset. After performing $\tau$ local update iterations, each device n sends its final model to the server.
\begin{align} \label{eq:localUpdate}
    \boldsymbol{w}_n(t) = \boldsymbol{w}_g(t) - \eta(t) \boldsymbol{A}_n(t) \odot \sum_{i = 1}^\tau \nabla f_n( \boldsymbol{w}_n^{i}(t), b_n^i(t))
\end{align}

In the aggregation step, we consider that the server aggregates the received final local models by taking the masking vectors of the devices into account. The global model for the next communication round can thus be determined through stable aggregation of unfrozen parameters, as follows:
\begin{align} \label{eq:globalAggregation}
    \boldsymbol{w}_g(t+1) = \sum_{n \in \mathcal{N}} \boldsymbol{c}_n(t) \odot \boldsymbol{w}_n(t)
\end{align}
where $(\boldsymbol{c}_n(t))_k = \frac{(A_n(t))_k}{\sum_{n \in \mathcal{N}}(\boldsymbol{A}_n(t))_k}$ denotes the k-th tensor selection of mask $\boldsymbol{A}_n(t)$ at training round $t$. Using $(\boldsymbol{c}_n(t))_k$ in
\eqref{eq:globalAggregation} indicates that the server only aggregates the updated parameters from the participating devices.

The analysis relies on the following assumptions, which are commonly used for obtaining the convergence rate of different FL algorithms in the literature.

\begin{assumption}\label{as:1}
  The function $f_n(\boldsymbol{w}), n \in \mathcal{N}$ is L-smooth and satisfies:
  \begin{align}
      ||\nabla f_n( \boldsymbol{w}_n^{i}(t)||^2 \le 2L(f_n( \boldsymbol{w}_n^{i}(t) - f_n^*), n \in \mathcal{N}, i = 1, \dots, \tau, \forall t
  \end{align}
  where $f_n^*$ denotes the minimum value of $f_n(\boldsymbol{w})$.
\end{assumption}

\begin{assumption}\label{as:2}
  $\nabla f_n( \boldsymbol{w}_n^{i}(t), b_n^i(t))$ is an unbiased stochastic gradient of function $f_n(\boldsymbol{w})$. The variance of the masked stochastic gradients is bounded for each device $n \in \mathcal{N}$. We have
  \begin{align}
      \mathbb{E} ||\boldsymbol{c}_n(t) \odot \nabla f_n( \boldsymbol{w}_n^{i}(t), b_n^i(t)) - \boldsymbol{c}_n(t) \odot \nabla f_n(\boldsymbol{w}_n^{i}(t) ||^2 \le \xi_n^2, n \in \mathcal{N}, i = 1, \dots, \tau, \forall t
  \end{align}
\end{assumption}

\begin{assumption}\label{as:3}
The expected squared L2-norm of the masked stochastic gradients for all the devices is uniformly bounded. We have
\begin{align}
    \mathbb{E} || \boldsymbol{A}_n(t) \odot \nabla f_n( \boldsymbol{w}_n^{i}(t), b_n^i(t)) ||^2 \le G^2, n \in \mathcal{N}, i = 1, \dots, \tau, \forall t
\end{align}
\end{assumption}



\begin{lemma} \label{lemma1}
    The following inequality holds for any vectors $x$ and $z \in \mathbb{R}^d$, for which there exists $Q > 0$ satisfying $|min_k(x \odot z)_k| \le Q$, and for any vector $y \in \mathbb{R}^d$.
    \begin{align}
        \langle x, y \odot z \rangle \le \max_k (y)_k \langle x, z \rangle + Q \left(d \max_k(y)_k - \sum_{k=1}^d (y)_k \right)
    \end{align}
    where $\langle \cdot, \cdot \rangle$ denotes the inner product operator in $\mathbb{R}^d$.
\end{lemma}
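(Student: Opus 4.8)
The plan is to prove the inequality by reducing it to a purely componentwise statement and then exploiting the sign of each factor. First I would expand both sides in coordinates. Writing $(x \odot z)_k = x_k z_k$ and abbreviating $y_{\max} = \max_k (y)_k$, the left-hand side is $\langle x, y \odot z \rangle = \sum_{k=1}^d x_k (y)_k z_k$, the first term on the right is $y_{\max}\langle x, z\rangle = y_{\max}\sum_{k=1}^d x_k z_k$, and the second term is $Q\bigl(d\,y_{\max} - \sum_{k=1}^d (y)_k\bigr) = Q\sum_{k=1}^d\bigl(y_{\max} - (y)_k\bigr)$.

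Moving everything to one side, the claimed inequality is equivalent to
\begin{align}
\sum_{k=1}^d \bigl((y)_k - y_{\max}\bigr)\bigl(x_k z_k + Q\bigr) \le 0.
\end{align}
The key step is to recognize that every summand is nonpositive. The first factor satisfies $(y)_k - y_{\max} \le 0$ by the definition of the maximum. For the second factor, the hypothesis $|\min_k (x\odot z)_k| \le Q$ yields in particular $\min_k (x\odot z)_k \ge -Q$, so $x_k z_k \ge -Q$ and hence $x_k z_k + Q \ge 0$ for every $k$. A nonpositive number times a nonnegative number is nonpositive, so each term of the sum is $\le 0$; summing over $k$ gives the rearranged inequality, which is exactly the statement of the lemma.

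There is no substantive obstacle here: the only real work is the algebraic rearrangement that exposes the factored, componentwise form above, after which the two sign conditions close the argument immediately. The one point worth flagging—easy to overlook—is that the hypothesis is phrased through the absolute value $|\min_k(x\odot z)_k|$, whereas only the lower bound $\min_k(x\odot z)_k \ge -Q$ is actually used; the matching upper bound plays no role in the estimate. I expect the proof to be a few lines once the target inequality is written in the form $\sum_k\bigl((y)_k - y_{\max}\bigr)(x_k z_k + Q)\le 0$.
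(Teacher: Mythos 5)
Your proof is correct, but it takes a genuinely different route from the paper. The paper proves the lemma by lifting the vectors to diagonal matrices $X$, $Y$, $Z$, writing $\langle x, y\odot z\rangle = \mathrm{Tr}(XYZ)$, and invoking an external trace inequality (Theorem 3 of Fang et al.), namely $\mathrm{Tr}(XYZ) \le \lambda_1(Y)\,\mathrm{Tr}(XZ) - \lambda_d(XZ)\bigl(d\lambda_1(Y) - \mathrm{Tr}(Y)\bigr)$, then identifies $\lambda_1(Y) = \max_k(y)_k$ and $\lambda_d(XZ) = \min_k(x\odot z)_k$ and bounds the second term using $|\min_k(x\odot z)_k|\le Q$ together with $d\max_k(y)_k - \sum_k (y)_k \ge 0$. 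Your argument reaches the same conclusion with no external machinery: the rearrangement
\begin{align}
\sum_{k=1}^d \bigl((y)_k - \max_j (y)_j\bigr)\bigl(x_k z_k + Q\bigr) \le 0
\end{align}
follows immediately from the two sign conditions $(y)_k \le \max_j(y)_j$ and $x_k z_k \ge \min_j (x\odot z)_j \ge -Q$. What your approach buys is self-containedness and transparency: it makes explicit that only the one-sided bound $\min_k(x\odot z)_k \ge -Q$ is needed (the paper uses exactly the same half of the hypothesis, just disguised as $-\lambda_d(XZ) \le Q$), and it sidesteps a transcription slip in the paper, where the cited trace inequality's term $-\lambda_d(XZ)(\cdots)$ appears with a flipped sign as $+\min_k(x\odot z)_k(\cdots)$ in the intermediate display before the final bound is written down. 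The paper's route, for its part, situates the lemma inside a known family of matrix-trace inequalities, which could generalize beyond diagonal (i.e., beyond elementwise) settings, but for the statement as given your elementary factorization is the shorter and cleaner proof.
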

\begin{proof}
Given vectors $x$, $y$, and $z$, we form diagonal matrices X, Y and Z, respectively. Note that wee can write $\langle z, y \odot z \rangle$ as the form of the trace of matrices X, Y and Z product, i.e., $\langle z, y \odot z \rangle = Tr(XYZ)$. By using Theorem 3 in \cite{fang1994inequalities}, we have the following inequality:
\begin{align}
    Tr(XYZ) \le \lambda_1(Y) Tr(XZ) - \lambda_d(XZ)(d \lambda_1(Y) - Tr(Y))
\end{align}
where $\lambda_1(Y)$ and $\lambda_d(XZ)$ are the largest eigenvalue of matrix $Y$ and the smallest eigenvalue of
matrix $XZ$, respectively. Since the considered matrices are diagonal, we have $\lambda_1(Y) = \max_k(y)_k$ and $\lambda_d(XZ) = \min_k(x \odot z)_k$. Hence, we have
\begin{align}
    \langle x, y \odot z \rangle \le \max_k (y)_k \langle x, z \rangle + \min_k (x \odot z)_k \left(d \max_k(y)_k - \sum_{k=1}^d (y)_k \right)\\
    \le \max_k (y)_k \langle x, z \rangle + Q\left(d \max_k(y)_k - \sum_{k=1}^d (y)_k \right)
\end{align}
Since $d \max_k(y)_k - \sum_{k=1}^d (y)_k \ge 0$, by considering $|min_k(x \odot z)_k| \le Q$, Lemma \ref{lemma1} is proved using
the second inequality.
\end{proof}

We define the term $\gamma_n(t) = \max_k (\boldsymbol{c}_n(t))_k$ to quantify the degree of device heterogeneity in the network. Note that in the full device participation scenario, $\frac{1}{N} \le \gamma_n(t) \le 1, n \in \mathcal{N}$. 
Then, the following Theorem \ref{the:convergency} shows that the convergence bound of employing the masks to address the device heterogeneity issue in FL.
Then, the following Theorem \ref{the:convergency} shows that employing the masks to address the device heterogeneity issue in FL leads to a bias term in the convergence bound. However, it does not affect the convergence rate, which is similar to the observation in paper \cite{wu2024fiarse}.

\begin{theorem} \label{the:convergency}
    Under Assumptions 1-3, and for smooth and non-convex loss functions, if the total number of communication rounds T is pre-defined and the learning rate $\eta(t)$ is smart enough such that $\eta(t) = \eta \le \frac{1}{LN^2 \tau}$, we have
    \begin{align}
        \frac{1}{T} \sum_{t = 1}^T \mathbb{E} ||\nabla F(\boldsymbol{w}_g(t)) ||^2 \le \frac{2}{\eta \tau T}(F(w_g(1)) - F^*) + LN\tau \eta \sum_{n = 1}^N \xi_n^2 \notag\\
        + 2 \Psi \underbrace{\sum_{n=1}^N \left(d_\theta \gamma_n(t) - \sum_{k =1}^{d_\theta} (\boldsymbol{c}_n(t))_k\right)}_{O_1} + L^2 \eta^2 G^2 \frac{(\tau -1)(2\tau - 1)}{6}
    \end{align}
    where $\Psi$ is a constant satisfying $\max_k(\nabla f_n( \boldsymbol{w}_n^{i}(t), ) \odot \nabla F(\boldsymbol{w}_g(t)))_k \le \Psi, \forall n, i, t$. $F^* = F(\boldsymbol{w}^*)$, where $\boldsymbol{w}^*$ is the global optimal weight. $L$, $\xi^2_n$ and G are constants defined in Assumptions 1-3.
\end{theorem}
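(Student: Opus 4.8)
The plan is to run the classical one-step descent analysis for smooth non-convex objectives, carefully tracking the coordinate-wise masking that makes the aggregated update a \emph{biased} surrogate of the global gradient. Writing $F(\boldsymbol{w}) = \frac{1}{N}\sum_n f_n(\boldsymbol{w})$ and invoking $L$-smoothness of $F$,
\begin{align}
F(\boldsymbol{w}_g(t+1)) \le F(\boldsymbol{w}_g(t)) + \langle \nabla F(\boldsymbol{w}_g(t)), \boldsymbol{w}_g(t+1) - \boldsymbol{w}_g(t)\rangle + \frac{L}{2}\|\boldsymbol{w}_g(t+1) - \boldsymbol{w}_g(t)\|^2 .
\end{align}
First I would simplify the one-round increment. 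Combining \eqref{eq:localUpdate} with \eqref{eq:globalAggregation} and using two elementary facts that follow from $(\boldsymbol{A}_n(t))_k\in\{0,1\}$ and the definition of $\boldsymbol{c}_n(t)$ — namely $\sum_n \boldsymbol{c}_n(t)=\mathbf{1}$ coordinate-wise and $\boldsymbol{c}_n(t)\odot\boldsymbol{A}_n(t)=\boldsymbol{c}_n(t)$ — the $\boldsymbol{w}_g(t)$ part cancels and
\begin{align}
\boldsymbol{w}_g(t+1) - \boldsymbol{w}_g(t) = -\eta\sum_{n}\boldsymbol{c}_n(t)\odot\sum_{i=1}^\tau \nabla f_n(\boldsymbol{w}_n^i(t), b_n^i(t)) .
\end{align}

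Next I would take the conditional expectation over round $t$ and use the unbiasedness in Assumption \ref{as:2} to replace the stochastic gradients by $\nabla f_n(\boldsymbol{w}_n^i(t))$ inside the inner product. The central term to control is $-\eta\sum_{n,i}\langle \nabla F(\boldsymbol{w}_g(t)), \boldsymbol{c}_n(t)\odot\nabla f_n(\boldsymbol{w}_n^i(t))\rangle$, and here I would apply Lemma \ref{lemma1} with $x=\nabla F(\boldsymbol{w}_g(t))$, $y=\boldsymbol{c}_n(t)$, $z=\nabla f_n(\boldsymbol{w}_n^i(t))$ and $Q=\Psi$, which yields
\begin{align}
\langle \nabla F, \boldsymbol{c}_n\odot\nabla f_n\rangle \le \gamma_n(t)\langle \nabla F, \nabla f_n\rangle + \Psi\Big(d_\theta\gamma_n(t) - \sum_{k=1}^{d_\theta}(\boldsymbol{c}_n(t))_k\Big),
\end{align}
since $\gamma_n(t)=\max_k(\boldsymbol{c}_n(t))_k$. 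Summing over $n$ and $i$ isolates precisely the bias term $O_1$, while the leading part $\sum_n\gamma_n(t)\langle\nabla F,\nabla f_n\rangle$ is what I would convert into $-\|\nabla F(\boldsymbol{w}_g(t))\|^2$ up to client drift.

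To finish, I would add and subtract $\nabla f_n(\boldsymbol{w}_g(t))$ and bound the mismatch $\|\boldsymbol{w}_n^i(t)-\boldsymbol{w}_g(t)\|$ by $L$-smoothness; since each of the preceding local steps moves by at most $\eta G$ (Assumption \ref{as:3}), summing the accumulated inner-loop drift produces the factor $\tfrac{(\tau-1)(2\tau-1)}{6}$ after normalization, giving the $L^2\eta^2 G^2\tfrac{(\tau-1)(2\tau-1)}{6}$ term. The quadratic term I would split by a bias--variance decomposition: the variance piece is bounded via Assumption \ref{as:2} to give the $LN\tau\eta\sum_n\xi_n^2$ contribution, and the mean piece is absorbed into the descent using the step-size restriction $\eta\le\frac{1}{LN^2\tau}$. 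Rearranging to lower-bound $\eta\tau\|\nabla F(\boldsymbol{w}_g(t))\|^2$, telescoping the function values over $t=1,\dots,T$ to $F(\boldsymbol{w}_g(1))-F^*$, and dividing by $\eta\tau T$ delivers the stated bound.

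The main obstacle is the coordinate-wise masking bias: unlike standard FedAvg, $\sum_n\boldsymbol{c}_n(t)\odot\nabla f_n$ is not $\nabla F$, so the inner product cannot be completed to a square directly. Lemma \ref{lemma1} is the crucial device that linearizes this coupling, but applying it cleanly requires the trace/eigenvalue inequality for diagonal matrices to hold entrywise and a uniform control $\Psi$ on the entries of $\nabla f_n\odot\nabla F$ — a delicate point, since the lemma's constant bounds $|\min_k(x\odot z)_k|$ whereas $\Psi$ is stated as a bound on the maximum, so the sign handling must be done with care. The remaining balance is selecting $\eta$ small enough (with the $N^2$ in $\eta\le\frac{1}{LN^2\tau}$ absorbing the worst-case heterogeneity $\gamma_n(t)\le 1$) so the drift and variance terms never dominate the negative $\|\nabla F\|^2$ term, ensuring $O_1$ enters additively without degrading the $O(1/T)$ rate.
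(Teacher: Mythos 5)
Your proposal follows essentially the same route as the paper's proof: $L$-smoothness descent, cancellation of $\boldsymbol{w}_g(t)$ via the normalization $\sum_n \boldsymbol{c}_n(t)=\mathbf{1}$, Lemma \ref{lemma1} applied with $x=\nabla F$, $y=\boldsymbol{c}_n(t)$, $z=\nabla f_n$ to extract the bias term $O_1$, a bias--variance split of the quadratic term yielding the $\xi_n^2$ contribution, the $\eta^2(i-1)^2G^2$ drift bound summed to the $\frac{(\tau-1)(2\tau-1)}{6}$ factor, the step-size condition to discard the residual squared-gradient term, and telescoping. Your flagged concern about $\Psi$ bounding a maximum while Lemma \ref{lemma1} requires $|\min_k(x\odot z)_k|\le Q$ is well taken, but this is a looseness present in the paper's own statement rather than a gap introduced by your argument.
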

\begin{proof}
    Considering the smoothness of $f_n(\boldsymbol{w}), n \in \mathcal{N}$, in each training round $t \ge 1$, we have 
    \begin{align} \label{eq:first}
        \mathbb{E} F(\boldsymbol{w}_g(t+1)) \le \mathbb{E} F(\boldsymbol{w}_g(t)) + \mathbb{E} \langle \boldsymbol{w}_g(t+1) - \boldsymbol{w}_g(t), \nabla  F(\boldsymbol{w}_g(t)) \rangle + \frac{L}{2} \mathbb{E} \| \boldsymbol{w}_g(t+1)) - \boldsymbol{w}_g(t)) \|^2
    \end{align}
    We first find an upper bound for $\| \boldsymbol{w}_g(t+1)) - \boldsymbol{w}_g(t)) \|^2$ as follows:
    \begin{align} \label{eq:upperBound_1}
        & \mathbb{E} \| \boldsymbol{w}_g(t+1) - \boldsymbol{w}_g(t) \|^2 \overset{(a)}{=} \eta^2(t) \mathbb{E} \left\| \sum_{n =1}^N \boldsymbol{c}_n(t) \odot \sum_{i = 1}^\tau \nabla f_n(\boldsymbol{w}_n^i(t), b_n^i(t)) \right\|^2 \notag\\
        & \overset{(b)}{=} \eta^2(t) \underbrace{\mathbb{E} \left\| \sum_{n =1}^N \sum_{i = 1}^\tau \boldsymbol{c}_n(t) \odot \nabla f_n(\boldsymbol{w}_n^i(t), b_n^i(t)) - \boldsymbol{c}_n(t) \odot f_n(\boldsymbol{w}_n^i(t)) \right\|^2 }_{M_1} \notag\\ 
        & + \eta^2(t) \underbrace{\left\|\sum_{n =1}^N \sum_{i = 1}^\tau \boldsymbol{c}_n(t) \odot f_n(\boldsymbol{w}_n^i(t)) \right\|^2}_{M_2}
    \end{align}
    where equality (a) results from \eqref{eq:localUpdate} and \eqref{eq:globalAggregation}. Equality (b) is obtained via basic equality $\mathbb{E} \| \boldsymbol{z} \|^2 = \mathbb{E} \|\boldsymbol{z} - \mathbb{E} \boldsymbol{z} \|^2 + \|\mathbb{E} \boldsymbol{z} \|^2$ for any random vector $\boldsymbol{z}$.
    By using Assumption \ref{as:2}, we have obtain an upper bound of $M_1$ as follows:
    \begin{align} \label{eq:upperBound_M1}
        & M_1 = \mathbb{E} \left\| \sum_{n =1}^N \sum_{i = 1}^\tau \boldsymbol{c}_n(t) \odot \nabla f_n(\boldsymbol{w}_n^i(t), b_n^i(t)) - \boldsymbol{c}_n(t) \odot f_n(\boldsymbol{w}_n^i(t)) \right\|^2 \notag\\
        & \le N \tau \sum_{n =1}^N \sum_{i = 1}^\tau \mathbb{E} \left\| \boldsymbol{c}_n(t) \odot \nabla f_n(\boldsymbol{w}_n^i(t), b_n^i(t)) - \boldsymbol{c}_n(t) \odot f_n(\boldsymbol{w}_n^i(t)) \right\|^2 \notag\\
        & \le N \tau^2 \sum_{n =1}^N \xi_n^2
    \end{align}
    By considering the convexity of $\|\cdot\|^2$ and by using $\gamma_n(t) = \max_k (\boldsymbol{c}_n(t))_k$, we can obtain an upper bound of $M_2$ as follows:
    \begin{align} \label{eq:upperBound_M2}
        & M_2 = \left\|\sum_{n =1}^N \sum_{i = 1}^\tau \boldsymbol{c}_n(t) \odot f_n(\boldsymbol{w}_n^i(t)) \right\|^2 \notag\\
        & \le N \tau \sum_{n =1}^N \sum_{i = 1}^\tau \left\| \boldsymbol{c}_n(t) \odot f_n(\boldsymbol{w}_n^i(t)) \right\|^2 \notag \\
        & \le N \tau \sum_{n =1}^N \sum_{i = 1}^\tau \gamma_n^2(t) \left\| f_n(\boldsymbol{w}_n^i(t)) \right\|^2
    \end{align}
    By combining \eqref{eq:upperBound_1}, \eqref{eq:upperBound_M1} and \eqref{eq:upperBound_M2}, we have the following inequality:
    \begin{align} \label{eq:expectUpdate}
        \mathbb{E} \| \boldsymbol{w}_g(t+1) - \boldsymbol{w}_g(t) \|^2 \le N \tau^2 \eta^2(t) \sum_{n =1}^N \xi_n^2 + N \tau \eta^2(t) \sum_{n =1}^N \sum_{i = 1}^\tau \gamma_n^2(t) \left\| f_n(\boldsymbol{w}_n^i(t)) \right\|^2
    \end{align}
    Now, we aim to obtain an upper bound of $\mathbb{E} \langle \boldsymbol{w}_g(t+1) - \boldsymbol{w}_g(t), \nabla  F(\boldsymbol{w}_g(t)) \rangle $. We have
    \begin{align}
        & \mathbb{E} \langle \boldsymbol{w}_g(t+1) - \boldsymbol{w}_g(t), \nabla  F(\boldsymbol{w}_g(t)) \rangle \notag\\
        & \overset{(a)}{=} \mathbb{E} \left \langle -\eta(t) \sum_{n =1}^N \sum_{i = 1}^\tau \boldsymbol{c}_n(t) \odot \nabla f_n(\boldsymbol{w}_n^i(t), b_n^i(t)), \nabla  F(\boldsymbol{w}_g(t)) \right \rangle \notag \\
        & \overset{(b)}{=} \eta(t) \mathbb{E} \sum_{n =1}^N \sum_{i = 1}^\tau \left \langle  \boldsymbol{c}_n(t) \odot \nabla f_n(\boldsymbol{w}_n^i(t)), -\nabla  F(\boldsymbol{w}_g(t))  \right \rangle \notag \\
        & \overset{(c)}{\le} \eta(t) \mathbb{E} \sum_{n =1}^N \sum_{i = 1}^\tau - \gamma_n(t) \left \langle  \nabla f_n(\boldsymbol{w}_n^i(t)), \nabla  F(\boldsymbol{w}_g(t))  \right \rangle + \eta(t) \tau \Psi \sum_{n =1}^N \left( d_\theta \gamma_n(t) - \sum_{k =1}^{d_\theta} (\boldsymbol{c}_n(t))_k \right)\notag \\
        & \overset{(d)}{\le} - \eta(t) \sum_{i = 1}^\tau \mathbb{E} \left \langle \frac{1}{N} \sum_{n =1}^N \nabla f_n(\boldsymbol{w}_n^i(t)), \nabla  F(\boldsymbol{w}_g(t))\right \rangle + \eta(t) \tau \Psi \sum_{n =1}^N \left( d_\theta \gamma_n(t) - \sum_{k =1}^{d_\theta} (\boldsymbol{c}_n(t))_k \right)
    \end{align}
    where equality (a) results from \eqref{eq:localUpdate} and \eqref{eq:globalAggregation}. Equality (b) follows from $\mathbb{E} \nabla f_n(\boldsymbol{w}_n^i(t), b_n^i(t)) = \nabla f_n(\boldsymbol{w}_n^i(t))$. Inequality (c) holds by using Lemma \ref{lemma1}. Inequality (d) follows from $\gamma_n(t) \ge \frac{1}{N}$.
 
    To find an upper bound for $-\mathbb{E} \left \langle \frac{1}{N} \sum_{n =1}^N \nabla f_n(\boldsymbol{w}_n^i(t)), \nabla  F(\boldsymbol{w}_g(t))\right \rangle$, we first represent it as follows:
    \begin{align}
        & -\mathbb{E} \left \langle \frac{1}{N} \sum_{n =1}^N \nabla f_n(\boldsymbol{w}_n^i(t)), \nabla  F(\boldsymbol{w}_g(t))\right \rangle \notag \\
        & = \frac{1}{2} \mathbb{E} \left \| \frac{1}{N} \sum_{n =1}^N (\nabla f_n(\boldsymbol{w}_n^i(t)) - \nabla  f_n(\boldsymbol{w}_g(t))) \right \|^2 - \frac{1}{2} \mathbb{E} \left \| \frac{1}{N} \sum_{n =1}^N \nabla f_n(\boldsymbol{w}_n^i(t))  \right \|^2 - \frac{1}{2} \mathbb{E}  \left \| \nabla f_n(\boldsymbol{w}_g(t))  \right \|^2
    \end{align}
    Then, $\mathbb{E} \left \| \frac{1}{N} \sum_{n =1}^N (\nabla f_n(\boldsymbol{w}_n^i(t)) - \nabla  f_n(\boldsymbol{w}_g(t))) \right \|^2$ is bounded as follows:
    \begin{align}  \label{eq:20}
        & \mathbb{E} \left \| \frac{1}{N} \sum_{n =1}^N (\nabla f_n(\boldsymbol{w}_n^i(t)) - \nabla  f_n(\boldsymbol{w}_g(t))) \right \|^2 \notag \\
        & \overset{(a)}{\le} \frac{1}{N} \sum_{n =1}^N \mathbb{E}  \left \| \nabla  f_n(\boldsymbol{w}_g(t)) - \nabla f_n(\boldsymbol{w}_n^i(t))\right \|^2 \notag \\
        & \overset{(b)}{\le} \frac{L^2}{N} \sum_{n =1}^N \mathbb{E} \left \| \boldsymbol{w}_g(t) - \boldsymbol{w}_n^i(t) \right \|^2
    \end{align}
    where inequality (a) results from the convexity of $\| \cdot \|^2$. Inequality (b) results from Assumption \ref{as:1}. Now, we aim to bound $\mathbb{E} \| \boldsymbol{w}_g(t) - \boldsymbol{w}_n^i(t) \|^2$ for $i = 2, \dots \tau$. By using \eqref{eq:sgd}, we have 
    \begin{align} \label{eq:21}
        & \mathbb{E} \| \boldsymbol{w}_g(t) - \boldsymbol{w}_n^i(t) \|^2 \notag \\
        & = \mathbb{E} \left \| \eta(t) \boldsymbol{A}_n(t) \odot \sum_{j=1}^{i -1} \nabla f_n(\boldsymbol{w}_n^j(t), b_n^j(t))\right \|^2 \notag \\
        & \le \eta^2(t) (i - 1) \sum_{j=1}^{i -1} \mathbb{E} \left \| \boldsymbol{A}_n(t) \odot \nabla f_n(\boldsymbol{w}_n^j(t), b_n^j(t)) \right \|^2 \notag \\
        & \le \eta^2(t) (i -1)^2G^2
    \end{align}
    where the last inequality results from Assumption \ref{as:3}, By combining \eqref{eq:20} and \eqref{eq:21}, we have
    \begin{align} \label{eq:expectLocalandGlobal}
         \mathbb{E} \left \| \frac{1}{N} \sum_{n =1}^N (\nabla f_n(\boldsymbol{w}_n^i(t)) - \nabla  f_n(\boldsymbol{w}_g(t))) \right \|^2 \le L^2 \eta^2(t) (i - 1)^2 G^2
    \end{align}
    By combining \eqref{eq:first} and \eqref{eq:first} to \eqref{eq:expectLocalandGlobal}, we have 
    \begin{align} \label{eq:beforeSign}
        & \mathbb{E}  F(\boldsymbol{w}_g(t+1)) \le \mathbb{E} F(\boldsymbol{w}_g(t)) + \frac{L}{2} N \tau^2 \eta^2(t) \sum_{n =1}^N \xi_n^2 + \eta(t) \tau \Psi \sum_{n =1}^N \left( d_\theta \gamma_n(t) - \sum_{k =1}^{d_\theta} (\boldsymbol{c}_n(t))_k \right) \notag \\
        & - \frac{\eta(t) \tau}{2} \mathbb{E} \left \| \nabla  F(\boldsymbol{w}_g(t))\right \|^2 + L^2 \eta^3(t) G^2 \frac{\tau(\tau -1)(2\tau - 1)}{12} \notag \\
        & - \frac{\eta(t)}{2} \sum_{n =1}^N \sum_{i =1}^\tau \left ( \frac{1}{N} - L N \tau \gamma^2_n(t) \eta(t)\right) \left \| \nabla  F(\boldsymbol{w}_g(t))\right \|^2
    \end{align}
    Since $\eta(t) = \eta \le \frac{1}{LN^2\tau}$, we have last term $- \frac{\eta(t)}{2} \sum_{n =1}^N \sum_{i =1}^\tau \left ( \frac{1}{N} - L N \tau \gamma^2_n(t) \eta(t)\right) \left \| \nabla  F(\boldsymbol{w}_g(t))\right \|^2 \le 0$. By rearranging the terms in \eqref{eq:beforeSign}, we obtain
    \begin{align} \label{eq:afterSign}
        & \mathbb{E} \left \| \nabla  F(\boldsymbol{w}_g(t))\right \|^2 \le \frac{2}{\eta \tau} (\mathbb{E} F(\boldsymbol{w}_g(t)) - \mathbb{E}  F(\boldsymbol{w}_g(t+1))) + LN\tau\eta \sum_{n =1}^N \xi_n^2 \notag \\
        & + 2 \Psi \sum_{n =1}^N \left( d_\theta \gamma_n(t) - \sum_{k =1}^{d_\theta} (\boldsymbol{c}_n(t))_k \right) + L^2\eta^2G^2 \frac{\tau(\tau -1)(2\tau - 1)}{12}
    \end{align}
    Finally, we multiply both sides of \eqref{eq:afterSign} by $\frac{1}{T}$ and sum over $t = 1, \dots, T$. Then, Theorem \ref{the:convergency} is concluded by considering that the first term on the right-hand side of \eqref{eq:afterSign} is a telescoping series. We have
    \begin{align}
        & \frac{2}{\eta \tau T} \sum_{t = 0}^T (\mathbb{E} F(\boldsymbol{w}_g(t)) - \mathbb{E} F(\boldsymbol{w}_g(t+1))) = \frac{2}{\eta \tau T} (F(\boldsymbol{w}_g(1)) - \mathbb{E}F(\boldsymbol{w}_g(T+1))) \notag \\
        & \le \frac{2}{\eta \tau T} (F(\boldsymbol{w}_g(1)) - F^*)
    \end{align}
    where the last inequality is obtained by considering that $\mathbb{E} F(\boldsymbol{w}_g(t+1)) \ge F^*$.
\end{proof}

\end{document}